\newcommand{\argmax}{\mathop{\arg\max}}
\newcommand{\ep}{\mathbb{E}}
\newcommand{\data}{\mathcal{D}}
\newcommand{\RR}{\mathbb{R}}
\newcommand{\fig}[1]{Fig.~\ref{fig:#1}}
\newcommand{\tabl}[1]{Table~\ref{table:#1}}
\newcommand{\eqn}[1]{Eqn.~\eqref{eqn:#1}}
\newcommand{\secref}[1]{Sec.~\ref{sec:#1}}
\newcommand\blfootnote[1]{%
  \begingroup
  \renewcommand\thefootnote{}\footnote{#1}%
  \addtocounter{footnote}{-1}%
  \endgroup
}
\begin{document}
\title{Learning Implicit Generative Models By Teaching Density Estimators}
\author{Kun Xu$^{\dagger}$ \and
Chao Du$^{\dagger}$ \and
Chongxuan Li \and 
Jun Zhu$^{\ddag}$ \and
Bo Zhang}
\institute{Dept. of Comp. Sci. \& Tech., Institute for AI, BNRist Center,\\
  Tsinghua-Bosch ML Center, THBI Lab, Tsinghua University, Beijing, China
\email{ \{kunxu.thu,duchao0726,chongxuanli1991\}@gmail.com, \{dcszj,~dcszb\}@tsinghua.edu.cn}\blfootnote{$^\dagger$ Equal contribution. $^\ddag$ Corresponding author.}}

\maketitle
\begin{abstract}
Implicit generative models are difficult to train as no explicit density functions are defined. Generative adversarial nets (GANs) present a minimax framework to train such models, which however can suffer from mode collapse due to the nature of the JS-divergence. This paper presents a {\it learning by teaching} (LBT) approach to learning implicit models, which intrinsically avoids the mode collapse problem by optimizing a KL-divergence rather than the JS-divergence in GANs. In LBT, an auxiliary density estimator is introduced to fit the implicit model's distribution while the implicit model teaches the density estimator to match the data distribution. LBT is formulated as a bilevel optimization problem, whose optimal generator matches the true data distribution. LBT can be naturally integrated with GANs to derive a hybrid LBT-GAN that enjoys complimentary benefits. Finally, we present a stochastic gradient ascent algorithm with unrolling to solve the challenging learning problems. Experimental results demonstrate the effectiveness of our method.
\keywords{Deep Generative Models, Generative Adversarial Nets, Mode Collapse Problem}
\end{abstract}

\vspace{-.1cm}
\section{Introduction} \label{sec:introduction}
\vspace{-.1cm}

Deep generative models (DGMs)~\cite{kingma2013auto,goodfellow2014generative,oord2016pixel} are powerful tools to capture the distributions over complicated manifolds (e.g., natural images), especially the 
recent developments of implicit statistical models~\cite{radford2015unsupervised,arjovsky2017wasserstein,karras2017progressive},
also called implicit probability distributions~\cite{mohamed2016learning}.
Implicit models are flexible by adopting a sampling procedure rather than a tractable density. However, they are difficult to learn, partly because maximum likelihood estimation (MLE) is not directly applicable.

Generative adversarial networks (GANs)~\cite{goodfellow2014generative} address this difficulty by adopting a minimax game,
where a discriminator $D$ is introduced 
to distinguish whether a sample is real (i.e., from the data distribution) or fake (i.e., from a generator $G$), while $G$ tries to fool $D$ via generating realistic samples.
Although GANs can produce high quality samples, it suffers from lacking sample diversity, also known as the mode collapse problem~\cite{goodfellow2016nips},
which still remains unaddressed.

A compelling reason
for mode collapse arises from the objective function optimized by GANs~\cite{nguyen2017dual}, which is shown to minimize the JS-divergence between the data distribution $p_{\data}$ and the generator distribution $p_G$~\cite{goodfellow2014generative}. 
As shown in~\cite{huszar2015not,theis2015note} and illustrated in Fig.~\ref{fig:jsd_kl}, JS-divergence can be tolerant to mode collapse whereas the $KL(p_{\data}||p_G)$ achieves its optima iff $p_{\data}=p_G$.
\cite{nowozin2016f} enable us to train implicit models via KL-divergence using importance sampling, i.e., estimating the KL-divergence using generated samples. However, it may also fail in practice~\cite{metz2016unrolled} as the KL-divergence will be under-estimated if the generated samples do not capture all modes in training data.

To address the above issues, we propose {\it learning by teaching} (LBT), a novel framework to learn implicit models. LBT can be shown to optimizes the KL-divergence, which is more resistant to mode collapse than the JS-divergence due to the zero-avoiding properties~\cite{nasrabadi2007pattern}.  
In LBT, we {\it learn} an implicit generator $G$
by {\it teaching} a density estimator $E$ 
to match the data distribution. 
The training scheme consists of two parts:
\begin{enumerate}[label=(\alph*),leftmargin=*]
    \item The estimator $E$ is trained to maximize the log-likelihood of the samples of the generator $G$;
    \label{step1}
    \item The generator $G$'s goal is to improve the performance of the trained estimator in terms of the log-likelihood of real data samples.
    \label{step2}
\end{enumerate}

\begin{figure}
\centering
    \begin{subfigure}[t]{0.41\textwidth}
        \includegraphics[width=\textwidth]{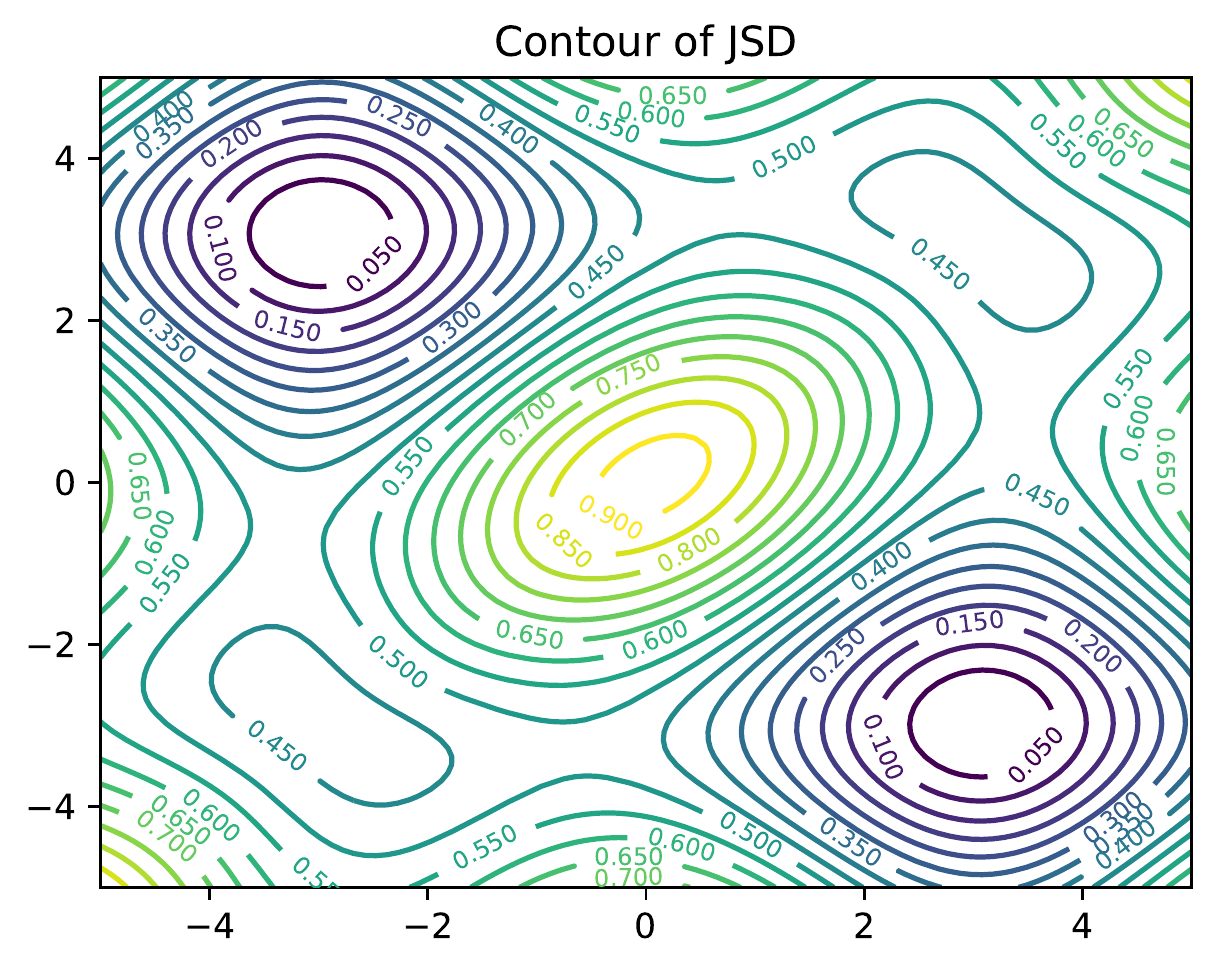}
    \end{subfigure}
    \begin{subfigure}[t]{0.41\textwidth}
        \includegraphics[width=\textwidth]{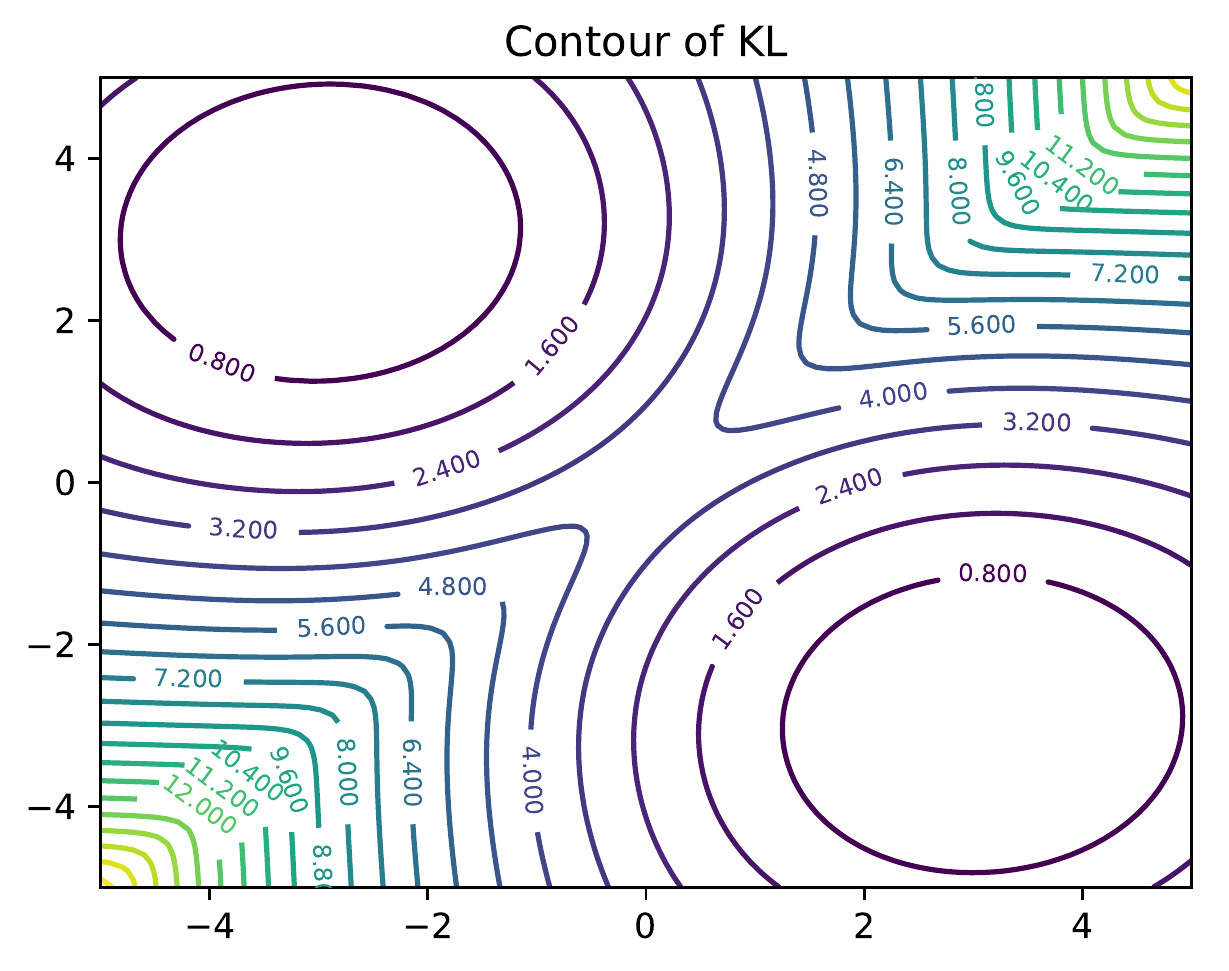}
    \end{subfigure}
	\caption{Suppose the data distribution is a mixture of Gaussian (MoG), i.e., $p_{\data}(x) = 0.5 \mathcal{N}(-3, 1) + 0.5 \mathcal{N}(3, 1)$, and the model distribution is a MoG with learnable means $\theta=[\theta_1, \theta_2]$, i.e., $p_G(x) = 0.5\mathcal{N}(\theta_1, 1) + 0.5\mathcal{N}(\theta_2, 1)$. The figure shows the contours of the two divergences with the $x$-axis and $y$-axis representing the value of $\theta_1$ and $\theta_2$ respectively. The JS-divergence allows mode-collapsed local optima while the KL-divergence does not.}
	\label{fig:jsd_kl}
    \vspace{-.2cm}
\end{figure}

Though in both LBT and GAN, an auxiliary model is introduced to help the training of the generator,
the role of $E$ in LBT is significantly different from that of $D$ in GAN, and they are complimentary to each other. 
The estimator $E$ in LBT penalize $G$ for missing modes in training data, whereas the discriminator $D$ in GAN penalize $G$ for generating unrealistic samples.
In LBT, $E$ always tracks $p_G$ and 
once $p_G$ misses some modes, the estimator $E$ will also miss them, resulting a poor likelihood of real data samples, which penalize $G$ heavily.
In such a manner, the estimator in LBT directs the generated samples to overspread the support of data distribution. 
In contrast, the goal of $D$ in the vanilla GAN is to distinguish whether a sample is real or fake.
Therefore, during the competing with $D$, $G$ will be penalized much more heavily for generating unrealistic samples than missing modes. Based on this insight, we further 
conjoin the complimentary advantages to develop LBT-GAN,
which augments LBT with a discriminator network.
In LBT-GAN, 
$E$ helps $G$ to overspread the data distribution and $D$ helps $G$ to generate realistic samples. 


Formally, LBT (and LBT-GAN) is formulated as a bilevel optimization~\cite{colson2007overview} problem, where an {\it upper} level optimization problem (i.e., part~\ref{step2}) depends on the optimal solution of a {\it lower} level problem (i.e., part~\ref{step1}).
The gradients of the upper problem w.r.t.\ the parameters of $G$ are intractable since the optimal solution of $E$ cannot be analytically expressed by $G$'s parameters. 
We propose to use the unrolling technique~\cite{metz2016unrolled} to efficiently approximate the gradients.
Under nonparametric conditions, the optimum of LBT (and LBT-GAN) is achieved when both the generator and the estimator converge to the data distribution. 
Besides, we further analyze that an estimator with insufficient capability can still help $G$ to resist to mode collapse in LBT-GAN.
Experimental results on both synthetic and real datasets demonstrate the effectiveness of LBT and LBT-GAN.


\vspace{-.1cm}
\section{Background} \label{sec:background}
\vspace{-.1cm}



Consider an implicit generative model $G(\cdot;\theta)$ parameterized by $\theta$ that maps a simple random variable $z\in\RR^H$ to a sample $x$ in the data space $\RR^L$, i.e., $x=G(z;\theta)$, where $H$ and $L$ are the dimensions of the random variables and the data samples, respectively.
Typically, $z$ is drawn from a standard Gaussian distribution $p_Z$ and $G$ is a feed-forward neural network.
The sampling procedure defines a distribution $p_G(x;\theta)$ over the data space. 
The goal of the generator $G$ is to approximate the data distribution $p_{\data}(x)$, i.e., to produce samples of high quality and diversity.

Since the generator distribution is implicit, it is infeasible to adopt MLE directly to train the generator.
To address this problem, GANs~\cite{goodfellow2014generative} adopt a minimax game, where a discriminator $D(\cdot;\psi)$ parameterized by $\psi$ is introduced to distinguish generated samples from true data samples,
while the generator $G$ tries to fool $D$ via generating realistic samples. The parameters of $G$ and $D$ are learned by solving a minimax game:
\begin{align}
\min_{\theta}\max_{\psi} f_{\textrm{GAN}}(\theta,\psi)\vcentcolon= \ep_{x\sim p_{\data}}[\log D(x;\psi)] \nonumber \\
+ \ep_{z\sim p_Z} [\log(1-D(G(z;\theta);\psi))].
\label{eqn:obj-gan}
\end{align}
\cite{goodfellow2014generative} show that the discriminator achieves its optimum when $D(x) = \frac{p_{\data}(x)}{p_{\data}(x) + p_G(x)}$, and solving the minimax problem is equivalent to minimizing the JS-divergence between $p_{\data}(x)$ and $p_G(x)$,
whose optimal point is $p_G=p_{\data}$, under the assumption that $G$ and $D$ have infinite capacity.
However, GANs can suffer from the mode collapse problem for both theoretical reasons~\cite{nguyen2017dual,huszar2015not} and practical reasons ~\cite{metz2016unrolled,srivastava2017veegan,arjovsky2017wasserstein}.

From the theoretical perspective, previous work has investigated the mode collapse nature of JS-divergence~\cite{nguyen2017dual}.
By optimizing the JS-divergence, the generative model tends to cover certain modes, rather than overspreading the data distribution~\cite{theis2015note}, thus leading to mode collapse in GANs.
\fig{jsd_kl} (left) presents a simple example, 
where the local optima with mode collapse can still be found by optimizing the JS-divergence, even if $p_G$ is flexible enough. In contrast, the KL-divergence can overcome this problem because of the zero-avoiding property~\cite{nasrabadi2007pattern}, and \fig{jsd_kl} (right) shows that $KL(p_{\data}||p_G)$ achieves its optima iff $p_G=p_{\data}$.


There are previous attempts on training implicit models by optimizing other divergence, including the KL-divergence~\cite{nowozin2016f,nguyen2017dual}.
For instance, D2GAN~\cite{nguyen2017dual} uses an auxiliary discriminator to diversify the generator distribution, which introduces the KL-divergence into the objective function.
However, it practically fails as the discriminators in D2GAN are fixed during the update of the generator, 
which makes that the gradient of the KL-divergence w.r.t.\ the generator cannot be propagated through the discriminator and breaks the zero-avoiding property of the KL-divergence.
{nowozin2016f} propose to estimate the KL-divergence using importance sampling, i.e., $KL(p_{\data}||p_G) = \ep_{p_G}[ \frac{p_{\data}}{p_G}\log \frac{p_{\data}}{p_G}]$. 
However, the estimation will be of large variance if the generator fails to capture all modes in data as it is difficult to draw a sample in the missed modes in $p_{\data}$ and the KL-divergence tends to be under-estimated. Therefore, once the generator distribution collapsed, the estimated KL-divergence cannot penalize the generator for missing modes and
encourage the generator to capture all modes in training data.


\vspace{-.1cm}
\section{Method}
\vspace{-.1cm}

To address the mode collapse issue, we present a novel framework \textit{learning by teaching} (LBT), which enables us to learn implicit models by optimizing the KL-divergence between $p_{\data}(x)$ and $p_G(x)$. 


\vspace{-.1cm}
\subsection{Learning by Teaching (LBT)}\label{sec:method:lbt}
\vspace{-.1cm}


We introduce an auxiliary density estimator $E$ with density $p_E(x;\phi)$ parameterized by $\phi$ to learn the distribution defined by the implicit generator $G(\cdot;\theta)$. The estimator $E$ provides a surrogate density for $G$ to estimate the KL-divergence between $p_{\data}$ and $p_G$.
Specifically, in LBT, the estimator $E$'s goal is to learn $p_G$ via MLE,
i.e., by maximizing the likelihood evaluated on samples generated from $G$.
And the generator's goal is to maximize $E$'s likelihood evaluated on real data samples,
which is possible since the generator's samples decide the training process of $E$.
As a consequence, $E$ only captures the modes of its ``training data'', i.e., the generated samples, and has low density for unseen data.
To avoid the penalty from the real samples, the generator $G$ has to overspread the true data distribution and cover all modes.
Formally, LBT is defined as a bilevel optimization problem~\cite{colson2007overview}:
\begin{align}\label{eqn:obj}
\max_\theta& \quad \ep_{x\sim p_{\data}(x)}[\log p_E(x;\phi^\star(\theta))] \\
\textrm{s.t.}&  \quad \phi^\star(\theta) = \argmax_{\phi} \ep_{z\sim p_Z} [\log p_E(G(z;\theta);\phi)],
\end{align} 
where $\phi^\star(\theta)$ indicates that the optimal $\phi^\star$ of the lower level problem depends on $\theta$, which is the variable to be optimized in the upper level problem. 
For simplicity and clarity, we denote the objectives of the upper and lower level problems as:
\begin{align}
f_G(\phi^\star(\theta))\vcentcolon=\ep_{x\sim p_{\data}(x)}[\log p_E(x;\phi^\star(\theta))],\nonumber \\
f_E(\theta, \phi)\vcentcolon=\ep_{z\sim p_Z} [\log p_E(G(z;\theta);\phi)]. \nonumber
\end{align}

We now provide the following theorem to demonstrate the correctness of LBT under the assumption that $G$ and $E$ have sufficient capacity,
which has been justified by recent advances of DGMs~\cite{kingma2013auto,oord2016pixel}.
%
%
\begin{theorem}\label{theorem:total}
	 Solving problem (\ref{eqn:obj}) is equivalent to minimizing the KL-divergence between the data distribution and the generator distribution, and it's optima is achieved when
	 \begin{align}
	     p_G = p_E = p_{\data}. \label{eqn:str-cond}
	 \end{align}
\end{theorem}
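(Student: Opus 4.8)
The plan is to exploit the bilevel (teacher--student) structure of problem~(2): the inner level fits the density estimator $E$ by maximum likelihood on samples drawn from the current generator $G$, and the outer level updates $G$ so as to maximize the log-likelihood that the trained estimator $p_E$ assigns to the real data. First I would analyze the inner optimization. For a fixed $G$, maximizing $\Exp_{\xv \sim p_G}[\log p_E(\xv)]$ over $E$ equals, up to an additive constant independent of $E$, minimizing $\KL(p_G \,\|\, p_E)$; assuming the estimator family is rich enough to contain $p_G$ (the usual nonparametric / sufficient-capacity assumption), the unique minimizer in distribution is $p_{E^\star} = p_G$.

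Next I would substitute this characterization of the optimal estimator back into the outer objective. The outer problem then reads $\max_G \Exp_{\xv \sim p_{\data}}[\log p_{E^\star}(\xv)] = \max_G \Exp_{\xv \sim p_{\data}}[\log p_G(\xv)]$. Writing $\Exp_{\xv \sim p_{\data}}[\log p_G(\xv)] = -\KL(p_{\data}\,\|\,p_G) - H(p_{\data})$ and noting that $H(p_{\data})$ does not depend on $G$, the outer problem is equivalent to $\min_G \KL(p_{\data}\,\|\,p_G)$, which is the claimed equivalence with KL minimization between the data and generator distributions.

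Finally, for the optima: $\KL(p_{\data}\,\|\,p_G) \ge 0$ with equality iff $p_G = p_{\data}$ (Gibbs' inequality), so the outer objective is optimized exactly when $p_G = p_{\data}$; combined with $p_{E^\star} = p_G$ from the inner step this yields $p_G = p_E = p_{\data}$. I expect the main obstacle to be the bookkeeping around the bilevel structure rather than any single inequality: one must justify that the estimator attains its optimum (so $p_{E^\star}$ is well defined and equals $p_G$ for \emph{every} $G$ along the way), that the capacity assumptions on both models hold so the nonparametric argument applies, and that replacing the nested problem ``first solve $E$, then solve $G$'' does not change the set of minimizers --- i.e., that the outer value at the joint optimum coincides with $\min_G \KL(p_{\data}\|p_G)$. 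A secondary point to handle carefully is that all of this is at the level of population objectives; if~(2) is stated with finite samples, one should flag that this is the standard idealized-capacity analysis.
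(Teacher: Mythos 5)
Your proposal is correct and follows essentially the same route as the paper: decompose the inner objective $\Exp_{x\sim p_G}[\log p_E(x)]$ as $-\KL(p_G\|p_E)-H(p_G)$ to conclude $p_{E^\star}=p_G$, then substitute into the outer objective to reduce it to maximum likelihood of the data under $p_G$, i.e.\ $\min_G \KL(p_{\data}\|p_G)$, with optimum $p_G=p_E=p_{\data}$. Your treatment is in fact slightly more explicit than the paper's about the outer substitution step and the capacity assumptions, but it is the same argument.
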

The proof is included in Appendix. Theorem~\ref{theorem:total} shows that the global optimum of LBT is achieved at $p_G=p_E=p_{\data}$ if the estimator has enough capacity.
Below, we give a further analysis to provide a weaker conclusion for LBT under a mild assumption that the estimator has only limited capacity.

{\bf Exponential family}: Consider the case where the estimator distribution $p_E(x)$ is in the exponential family form,
i.e., $p_E(x) = h(x) e^{\eta \cdot T(x) - A(\eta)}$,
where $T(x)$ denotes the sufficient statistics and $\eta$ are the natural parameters. In this case, for a certain distribution $q$, $KL(q\|p_E)$ achieves optimal iff $p_E$ captures the sufficient statistics of $q$, i.e., $\ep_{p_E} T(x) = \ep_{q} T(x)$~\cite{nasrabadi2007pattern}.
Therefore,
given $p_G$ in LBT,
the estimator distribution $p_E$ achieves optimal when $\ep_{p_E} T(x) = \ep_{p_G} T(x)$.
To make the estimator 
achieve an optimal likelihood on data samples (or equivalently, optimal $KL(p_{\data}\|p_E)$),
$G$ should ensure $E$ to capture the sufficient statistics of the data distribution, i.e., $\ep_{p_E} T(x) = \ep_{p_{\data}} T(x)$. Therefore, the estimator can still regularize $p_G$ to match $p_{\data}$ in terms of sufficient statistics:
\begin{align}\label{eqn:weak-cond}
    \ep_{p_G} T(x) = \ep_{p_{\data}} T(x),
\end{align}
which is a weaker conclusion with fewer assumptions compared to \eqn{str-cond}.
We provide an example to verify the above analysis in \secref{method:lbtgan} and demonstrate the effectiveness of an estimator beyond the exponential family on real applications in \secref{smnist}.

\vspace{-.1cm}
\subsection{Combining LBT with GAN}\label{sec:method:lbtgan}
\vspace{-.1cm}

The KL-divergence is known to be zero-avoiding~\cite{nasrabadi2007pattern} in that it encourages the model distribution to cover the data distribution.
However, in practice it may also result in low quality of generated samples~\cite{tolstikhin2017wasserstein}.
This property makes LBT complementary to GAN which tends to generate samples of high quality but lack sample diversity~\cite{goodfellow2016nips}.
To combine the best of both worlds,
we further propose to augment LBT with a discriminator as in GANs, and call the hybrid model LBT-GAN.
Formally, LBT-GAN solves the following bilevel problem:
\begin{align}\label{eqn:obj-lbtgan}
\max_\theta &\quad  f_G(\phi^\star(\theta))
-\lambda_G \cdot f_{\textrm{GAN}}(\theta,\psi^\star) \\
\textrm{s.t.}&  \quad \phi^\star(\theta) =  \argmax_{\phi} f_E(\theta, \phi), \\
& \quad \psi^\star =  \argmax_{\psi} f_{\textrm{GAN}}(\theta,\psi),
\end{align}
\noindent where
$\lambda_G$ balances the weight between two losses. 
Under the assumption that $G$ and the discriminator $D$ have sufficient capacity,
\cite{goodfellow2014generative} show that the optimum of GAN's minimax framework is achieved at $p_G=p_{\data}$, which is consistent with the conditions in 
Eqn.~\eqref{eqn:str-cond}\&\eqref{eqn:weak-cond} for LBT.
Therefore,
it is straightforward that LBT-GAN has the same global optimal solution as GAN, i.e., $p_G=p_{\data}$.

We show that LBT-GAN has advantages compared to GAN even when the estimator has only limited capacity. As mentioned above, GAN can suffer from mode collapse problem since gradient-based optimization methods could fall into a mode-collapsed local optimum of the JS-divergence. However these mode-collapsed local optima are less likely to satisfy the condition in \eqn{weak-cond}. The estimator can provide training signal to the generator and help it to escape the local optima that violates \eqn{weak-cond}, and therefore make LBT-GAN more resistant to the mode collapse problem.

\begin{figure}
    \centering
    \begin{subfigure}[t]{0.3\textwidth}
        \includegraphics[width=\textwidth]{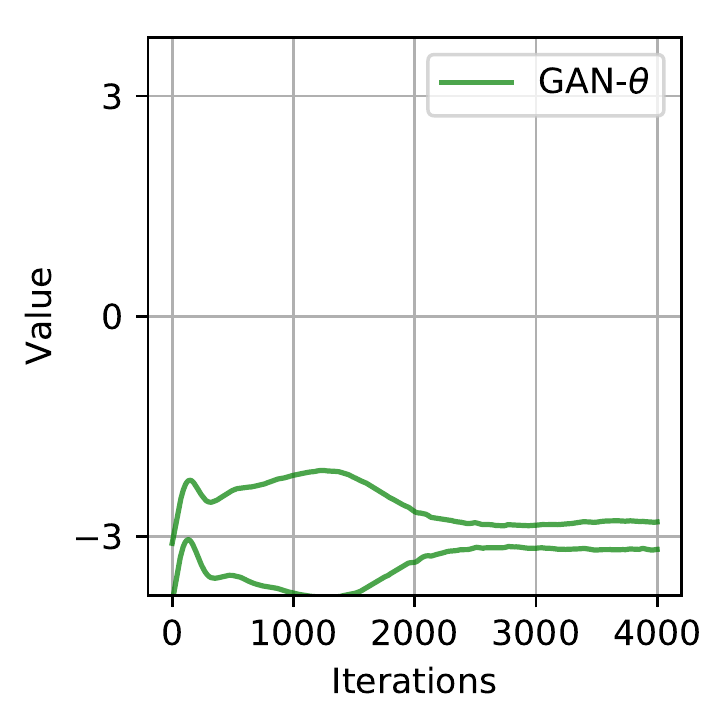}
    \end{subfigure}
    \begin{subfigure}[t]{0.3\textwidth}
        \includegraphics[width=\textwidth]{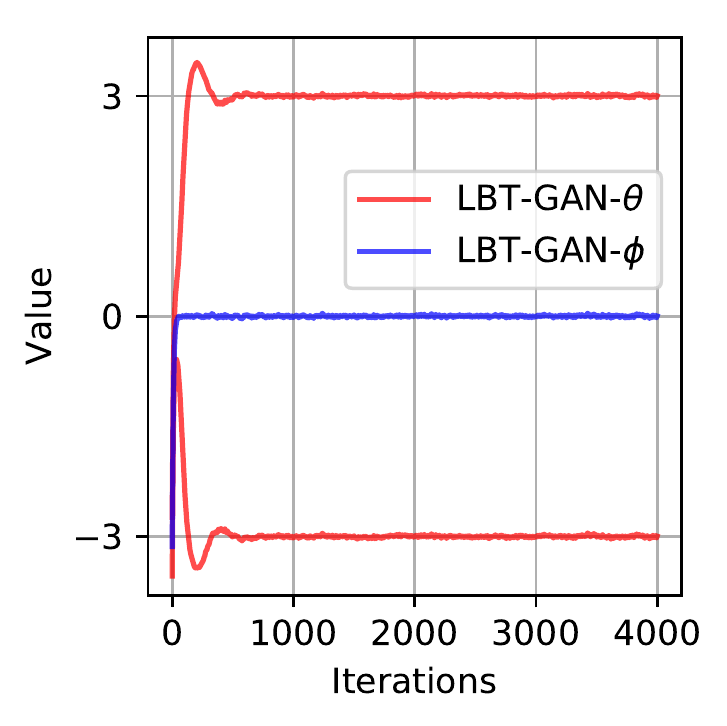}
    \end{subfigure}
    \begin{subfigure}[t]{0.3\textwidth}
        \includegraphics[width=\textwidth]{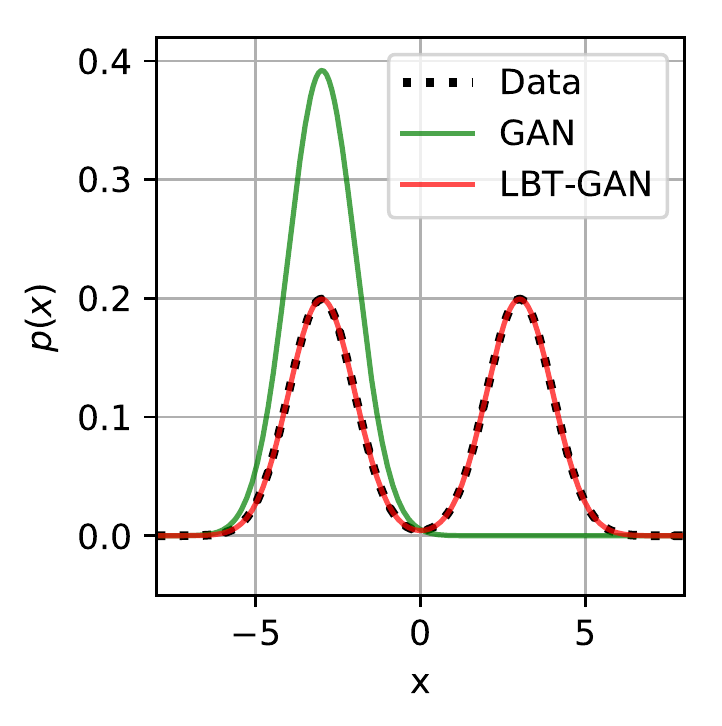}
    \end{subfigure}
	\caption{
	An illustration of LBT-GAN where the estimator has insufficient capacity. We consider the same data distribution $p_{\data}(x)$ and model distribution $p_G(x)$ as described in \fig{jsd_kl}.
	We train both GAN and LBT-GAN to learn the model $p_G(x)$. 
	For LBT-GAN, we use an estimator with insufficient capacity, i.e., $p_E(x)=\mathcal{N}(\phi, 1)$.
	The training processes of GAN (Left) and LBT-GAN (Middle) and their learned distributions (Right) are shown. We observe that while GAN learns a mode-collapsed model, LBT-GAN can escape the local optimum and capture the data distribution 
	in this case.
	}
	\label{fig:limite}
\end{figure}

To empirically verify our argument, we consider the settings of the toy example in \fig{jsd_kl}.
In LBT-GAN, we assume that the estimator $E$ is a single Gaussian $p_E(x)=\mathcal{N}(\phi, 1)$ with a learnable mean $\phi$, which can only capture the mean of a distribution. In this case,
the condition \eqn{weak-cond} ensures that $p_G$ and $p_{\data}$ have equal means. 
Therefore, if $p_G$ is around the mode-collapsed local optima of GAN 
(where the gradients of the GAN objective will be nearly zero), the gradients of the LBT objective will encourage the generator to escape the local optima with non-zero mean.
A clear demonstration is shown in \fig{limite}, where we identically initialize the means of $G$ around $-3$ in both LBT-GAN and GAN. We observe that GAN converges to a local optimum of JS-divergence, whereas LBT-GAN converges to the global optimal quickly as the estimator regularizes the generator to a distribution with zero mean.
Further experimental results on real datasets are illustrated in \secref{smnist}.

\vspace{-.1cm}
\subsection{Stochastic Gradient Ascent via Unrolling}\label{sec:method:if}
\vspace{-.1cm}

The bilevel problem is generally challenging to solve.
Here, we present a stochastic gradient ascent algorithm 
by using an unrolling technique~\cite{metz2016unrolled} to derive the gradient. For clarity, we focus on learning LBT and the methods can be directly applied to learn LBT-GAN. 
Specifically, to perform gradient ascent, we calculate the gradient of $f_G$ with respect to $\theta$ as follows:
\begin{align}
\frac{\partial f_G(\phi^\star(\theta))}{\partial \theta} & = \frac{\partial f_G(\phi^\star(\theta))}{\partial \phi^\star(\theta)}\frac{\partial \phi^\star(\theta)}{\partial \theta} \\
&=\frac{\partial f_G(\phi^\star(\theta))}{\partial \phi^\star(\theta)}\int_z \frac{\partial \phi^\star(\theta)}{\partial G(z;\theta)}\frac{\partial G(z;\theta)}{\partial \theta} p_Z dz, \nonumber
\end{align}
where both $\frac{\partial f_G(\phi^\star(\theta))}{\partial \phi^\star(\theta)}$ and $\frac{\partial G(z;\theta)}{\partial \theta}$ are easy to calculate.
However, the term $\frac{\partial \phi^\star(\theta)}{\partial G(z;\theta)}$ is intractable since $\phi^\star(\theta)$ can not be expressed as an analytic function of the generated samples $G(z;\theta)$.
We instead consider a local optimum $\hat{\phi}^\star$ of the density estimator parameters,
which can be expressed as the fixed point of an iterative optimization procedure with $\phi^0 = \phi$:
\begin{align}
\phi^{k+1} = \phi^k + \eta \cdot \left.\frac{\partial f_E(\theta,\phi)}{\partial\phi}\right|_{\phi^k} \label{eqn:unroll_general}, 
\hat{\phi}^\star = \lim_{k\rightarrow\infty}\phi^k,
\end{align}
where $\eta$ is the learning rate\footnote{We have omitted the learning rate decay for simplicity.}.
Since the samples used to evaluate the likelihood $f_E(\theta,\phi)$ are generated by $G(\cdot;\theta)$, each step of the optimization procedure is dependent on $\theta$.
We thus write $\phi^k(\theta,\phi^0)$ to clarify that $\phi^k$ is a function of $\theta$ and the initial value $\phi^0$. 
In the following, we rewrite $G(z;\theta)$ as $x_z$ and $\frac{\partial f_E(\theta, \phi)}{\partial \phi}$ as $\nabla \phi$ for simplicity.
Since $\nabla \phi$ is differentiable w.r.t.\ $x_z$ for most density estimators such as NADEs, $\phi^k(\theta,\phi^0)$ is also differentiable w.r.t.\ $x_z$.
By unrolling for $K$ steps, namely, 
using $\phi^K(\theta,\phi^0)$ to approximate $\phi^\star(\theta)$ in the objective $f_G(\phi^\star(\theta))$,
we optimize a surrogate objective formulated as $f_G(\phi^K(\theta,\phi^0))$ for the generator.
Thus, the term $\frac{\partial \phi^\star(\theta)}{\partial x_z}$ is approximated as $\frac{\partial \phi^\star(\theta)}{\partial x_z} \approx \frac{\partial \phi^K(\theta,\phi^0)}{\partial x_z}$,
which is known as the unrolling technique~\cite{metz2016unrolled}. 
Under the assumption that $\phi^0=\phi^\star$, the gradients provided by the unrolling technique are good approximations of the exact gradients. We give a formal theoretical proof in Appendix B.

Finally, the generator and the likelihood estimator can be updated using the following process:
\begin{align}
\theta \leftarrow  \theta + \eta_\theta \frac{\partial f_G(\phi^K(\theta,\phi))}{\partial \theta},~
\phi \leftarrow  \phi + \eta_\phi \frac{\partial f_E(\theta,\phi)}{\partial \phi}, 
\end{align}
where $\eta_\theta$ and $\eta_\phi$ are the learning rates for the generator and the estimator, respectively. We perform several updates of $\phi$ per update of $\theta$ to keep $p_E$ closed to $p_{G}$.
Note that for other gradient-based methods such as Adam~\cite{kingma2014adam}, the unrolling procedure is similar~\cite{metz2016unrolled}. 
In our experiments, only a few steps of unrolling, e.g., 5 steps, are sufficient.
The training procedure is described in Alg.~\ref{alg:method}.

\begin{algorithm}[h]
	\caption{Stochastic Gradient Ascent Training of LBT with the Unrolling Technique}
	\label{alg:method}
	\begin{algorithmic}
		\STATE {\bfseries Input:} data $x$, learning rate $\eta_\theta$ and $\eta_\phi$, unrolling steps $K$ and estimator update steps $M$.
		\STATE Initialize parameters $\theta_0$ and $\phi_0$, and $t=1$.
		\REPEAT
		\STATE $\phi_t^0 \leftarrow \phi_{t-1}$
		\FOR{$i=1$ {\bfseries to} $M$}
		\STATE $\phi_t^i \leftarrow \phi_t^{i-1} + \left.\eta_\phi\cdot\frac{\partial f_E(\theta,\phi)}{\partial\phi}\right|_{\phi_t^{i-1}}$ 
		\ENDFOR
		\STATE Update $\phi$: $\phi_t \leftarrow \phi_t^M$
		\STATE $\phi^0 \leftarrow \phi_{t}$
		\STATE Unrolling: $\phi^K \leftarrow \phi^0 + \left.\sum_{i=1}^K \eta_\phi\cdot\frac{\partial f_E(\theta,\phi)}{\partial\phi}\right|_{\phi^{i-1}}$
		\STATE Update $\theta$: $\theta_{t} \leftarrow \theta_{t-1} + \eta_\theta\frac{\partial f_G(\phi^K)}{\partial\theta}$
		\STATE Update $t$: $t\leftarrow t+1$
		\UNTIL{Both $\theta$ and $\phi$ converge.}
	\end{algorithmic}
\end{algorithm}


\begin{figure*}[tb]
    \centering
    \begin{subfigure}[t]{0.32\textwidth}
        \includegraphics[width=\textwidth,height=0.6\textwidth]{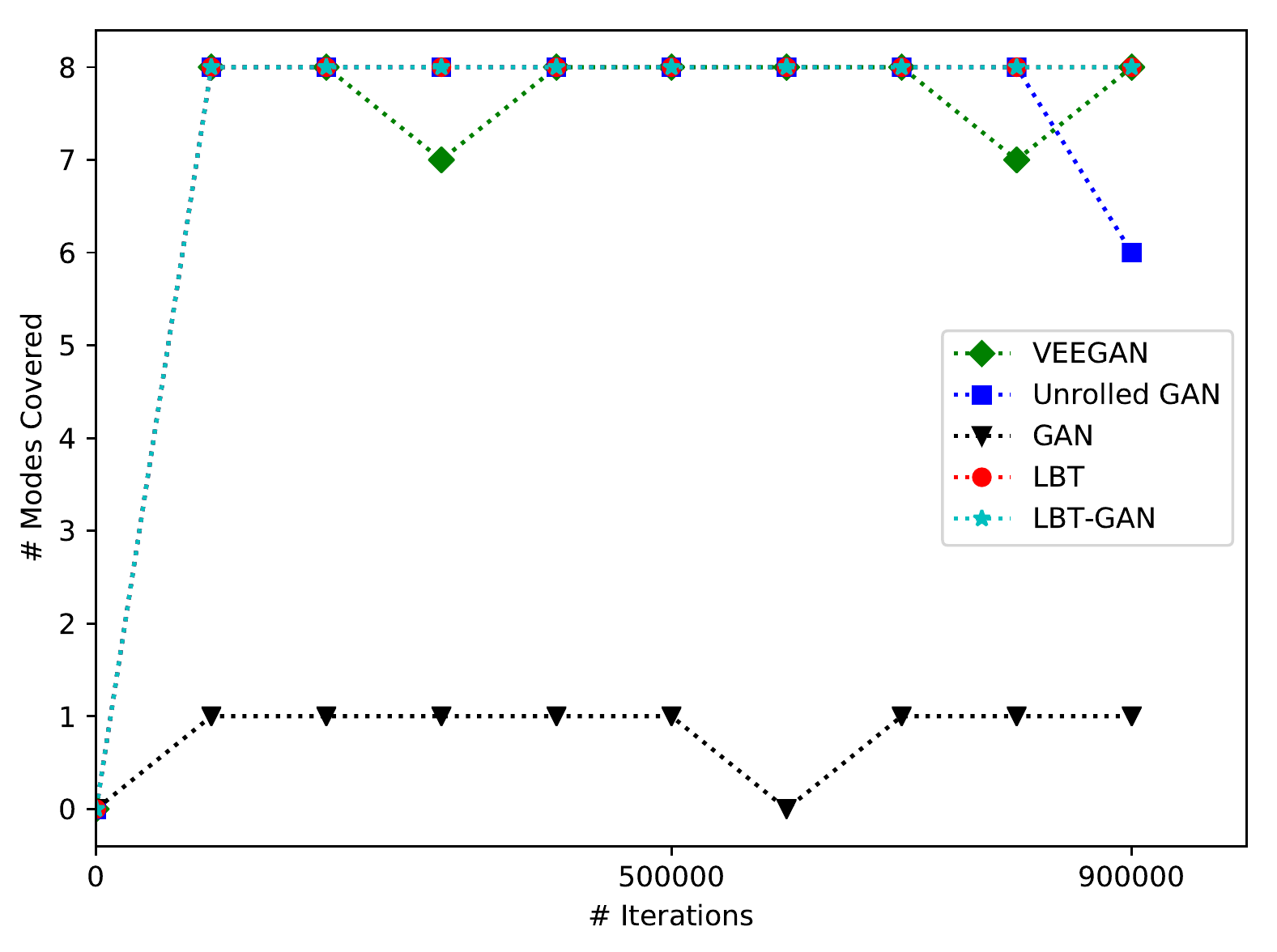}
    \end{subfigure}
    \begin{subfigure}[t]{0.32\textwidth}
        \includegraphics[width=\textwidth,height=0.6\textwidth]{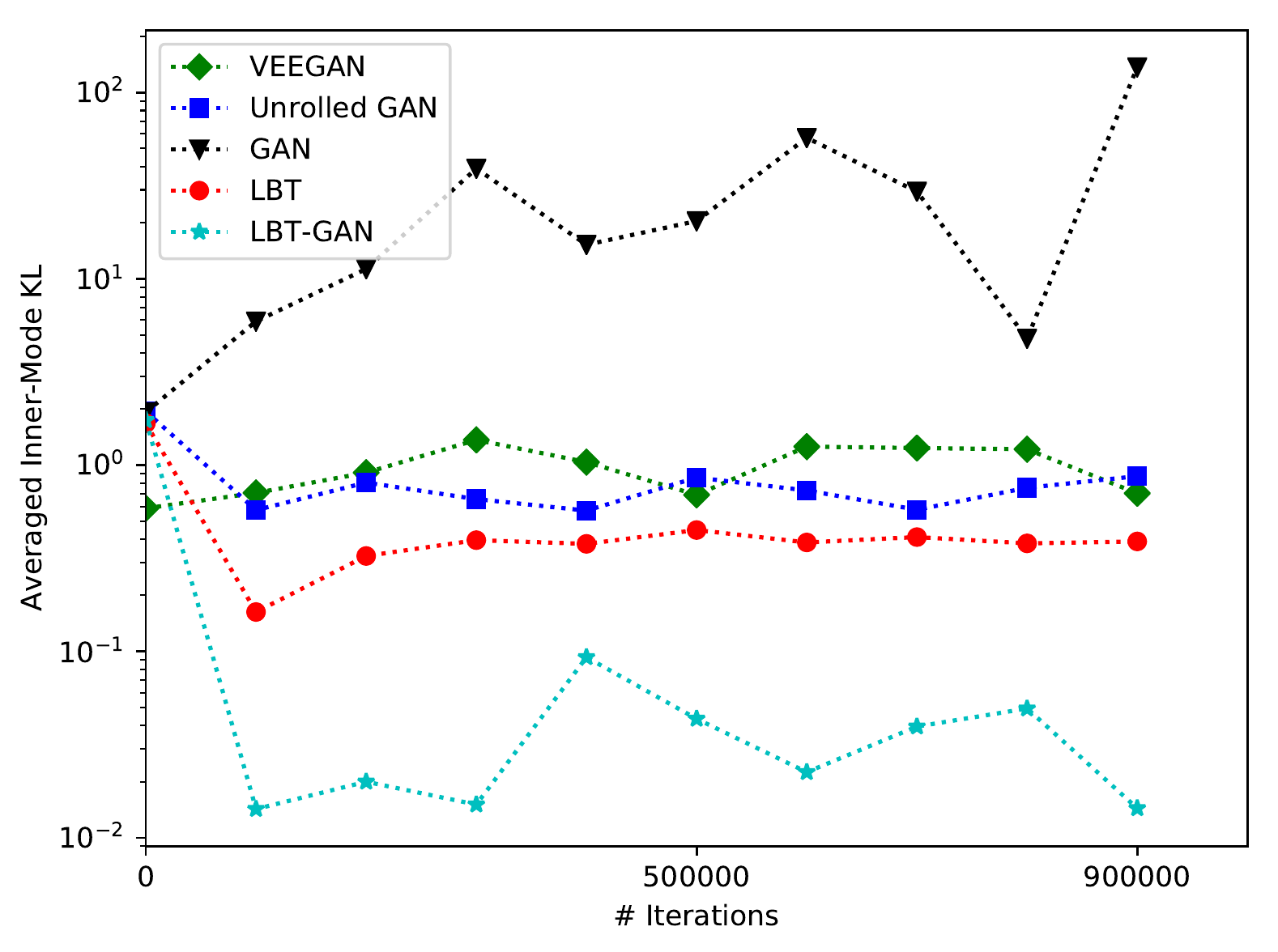}
    \end{subfigure}
    \begin{subfigure}[t]{0.32\textwidth}
        \includegraphics[width=\textwidth,height=0.6\textwidth]{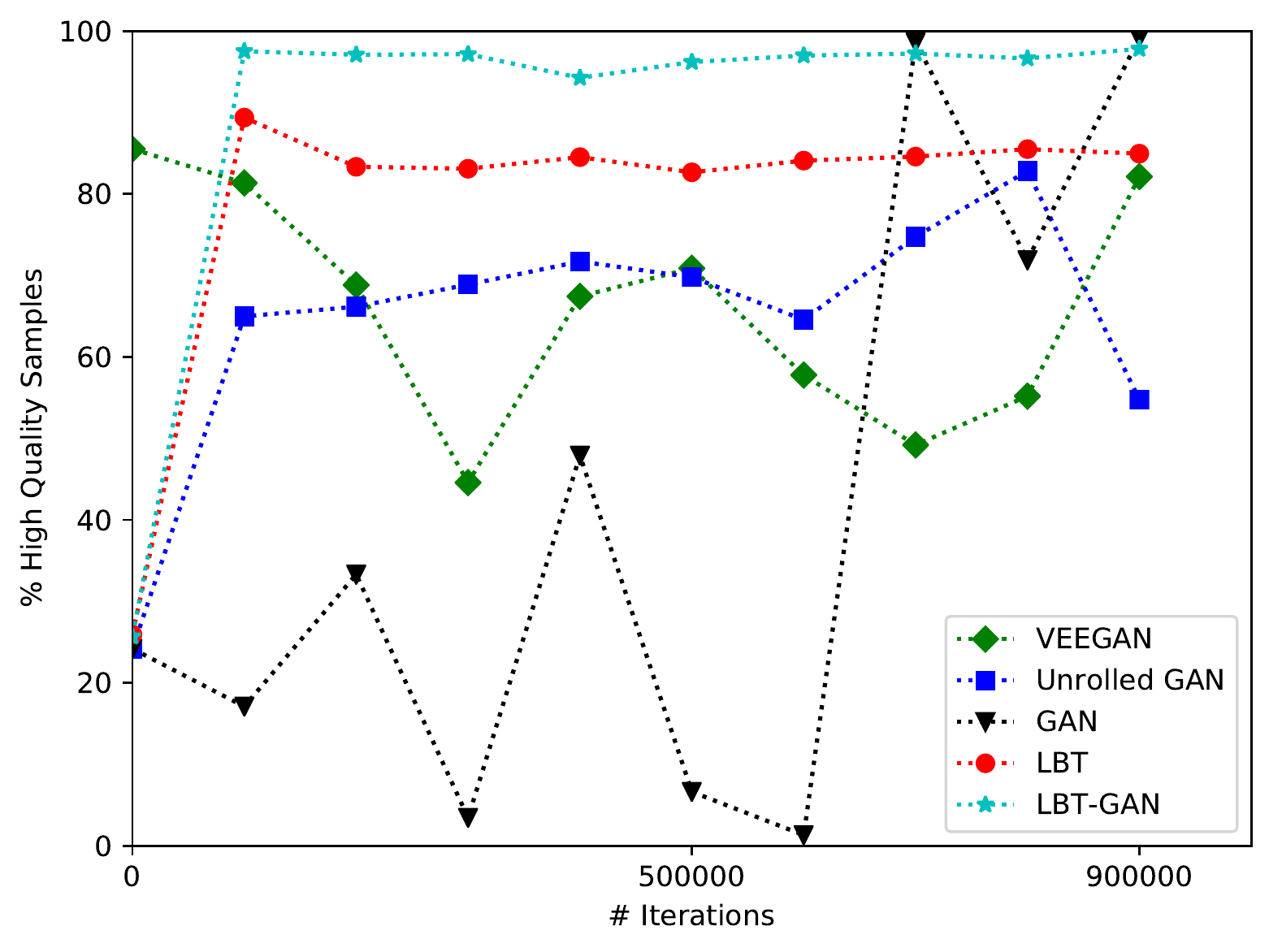}
    \end{subfigure}
    \\
    \begin{subfigure}[t]{0.32\textwidth}
        \includegraphics[width=\textwidth,height=0.6\textwidth]{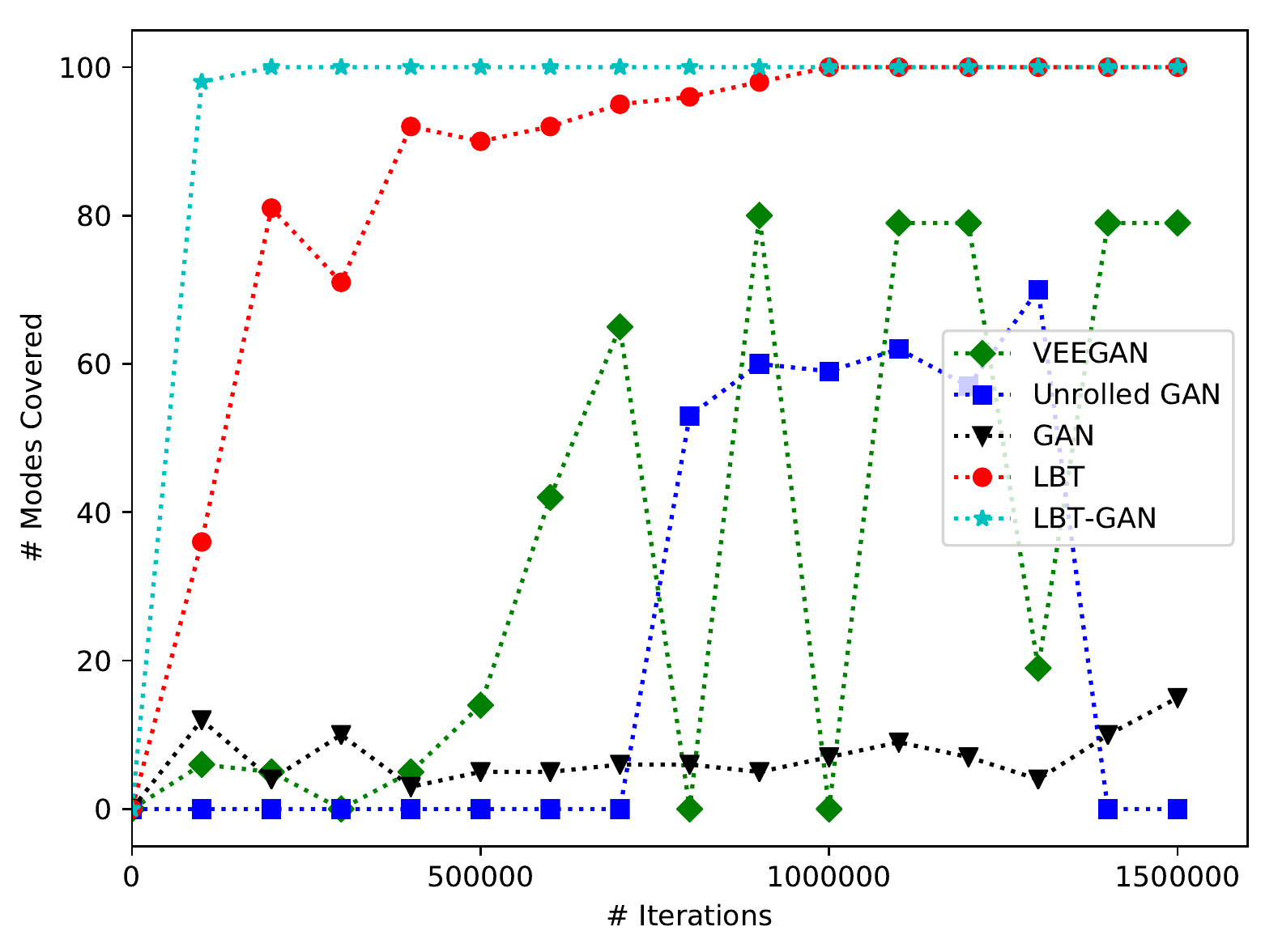}
        \caption{\# Modes Covered}\label{fig:toy-curve:grid:mode}
    \end{subfigure}
    \begin{subfigure}[t]{0.32\textwidth}
        \includegraphics[width=\textwidth,height=0.6\textwidth]{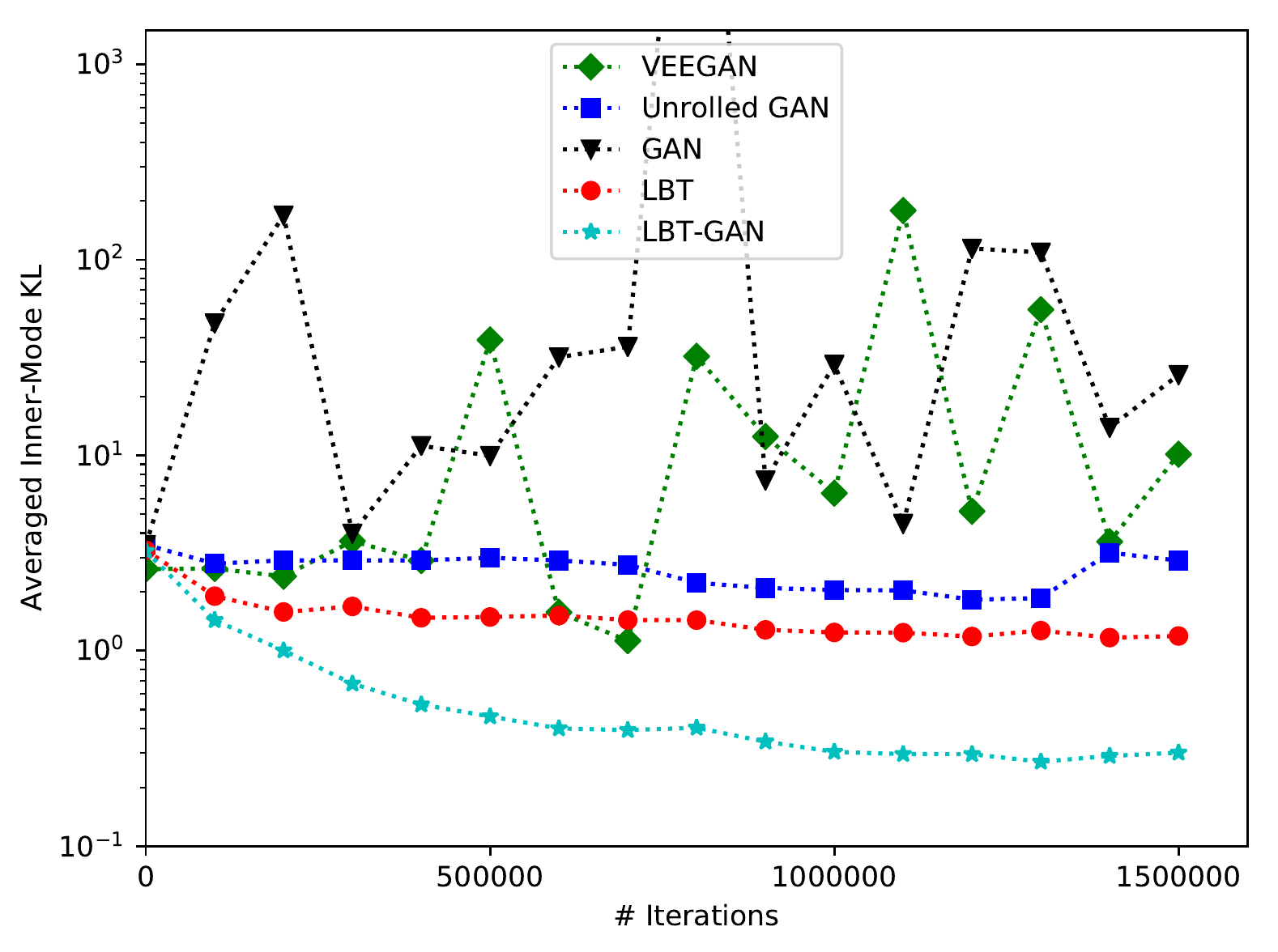}
        \caption{Intra-mode KL-divergence}\label{fig:toy-curve:grid:kl}
    \end{subfigure}
    \begin{subfigure}[t]{0.32\textwidth}
        \includegraphics[width=\textwidth,height=0.6\textwidth]{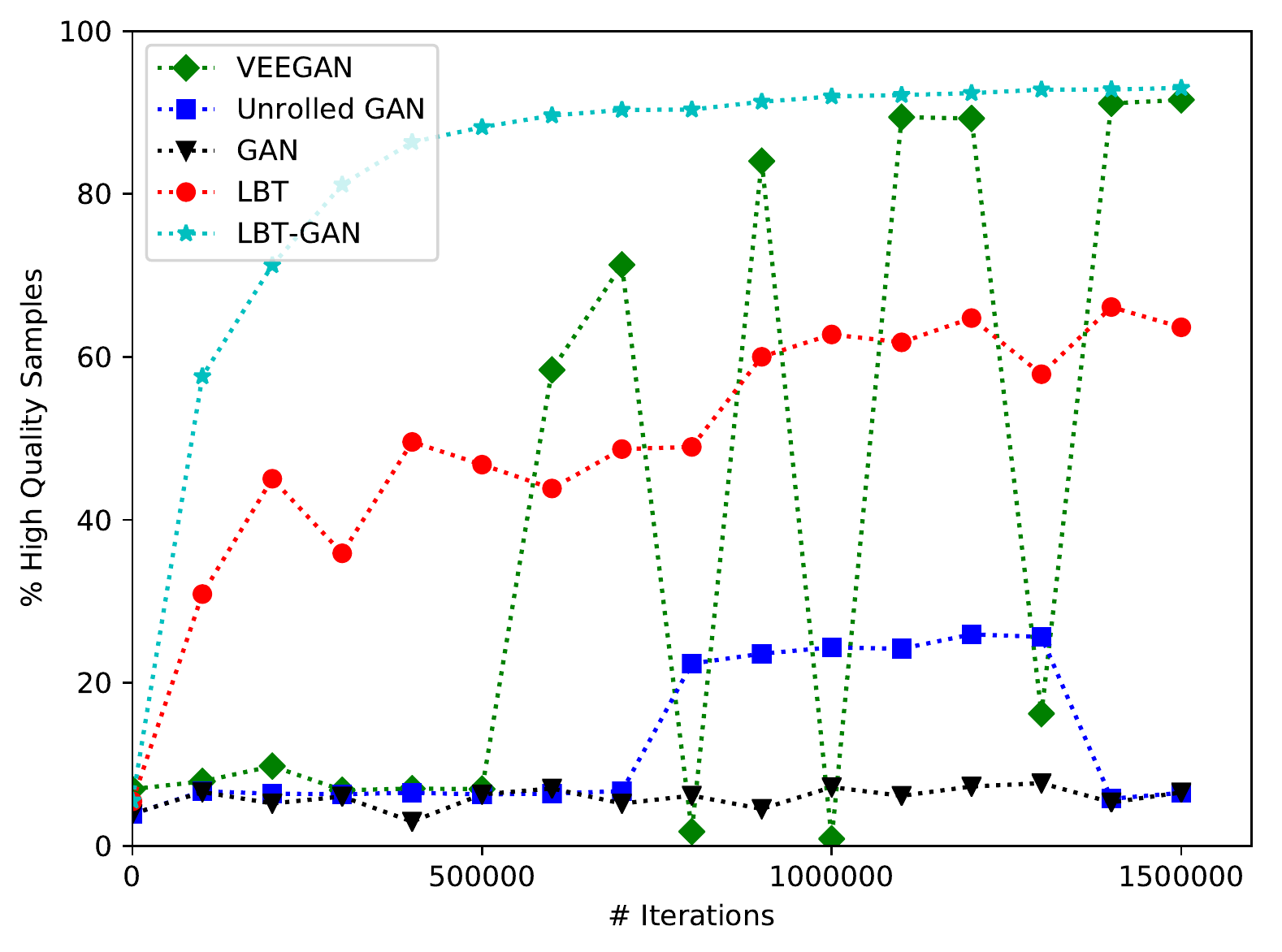}
        \caption{\% of high quality samples}\label{fig:toy-curve:grid:rate}
    \end{subfigure}
	\caption{Three different metrics evaluated on the generator distributions of different methods trained on the ring data (Top) and the grid data (Bottom). The metrics from left to right are: Number of modes covered (the higher the better); Averaged intra-mode KL-divergence (the lower the better); Percentage of high quality samples (the higher the better).}\label{fig:toy-curve}
    \vspace{-.2cm}
\end{figure*}

\vspace{-.1cm}
\section{Related Work}
\vspace{-.1cm}

Implicit statistical models~\cite{mohamed2016learning} are of great interests with the emergence of GAN~\cite{goodfellow2014generative} that introduces a minimax framework to train such models.
\cite{nowozin2016f} generalize the original GANs via introducing a broad class of $f$-divergence for optimization. In comparison, LBT provides a different way to optimize the KL-divergence and achieves good results on avoiding mode collapse and generating realistic samples when combined with GAN. \cite{arjovsky2017wasserstein} propose to minimize the earth mover's distance to avoid the problem of gradient vanishing in vanilla GANs. 
Besides, \cite{li2015generative} train implicit models by matching momentum between generated samples and real samples. 

Mode collapse is a well-known problem in practical training of GANs.
Much work has been done to 
alleviate the problem~\cite{arjovsky2017towards,arjovsky2017wasserstein,mao2017least,metz2016unrolled,srivastava2017veegan}.
Unrolled GAN~\cite{metz2016unrolled} proposes to unroll the update of the discriminator in GANs. The unrolling helps capturing how the discriminator would react to a change in the generator. Therefore it reduces the tendency of the generator to collapse all samples into a single mode.
VEEGAN~\cite{srivastava2017veegan} introduces an additional reconstructor net to map data back to the noise space. Then a discriminator on the joint space is introduced to learn the joint distribution, similar as in ALI~\cite{dumoulin2016adversarially}.
\cite{lin2017pacgan} propose to modify the discriminator to distinguish whether multiple samples are real or generated.

Different from methods in \cite{nowozin2016f,nguyen2017dual}, our method evaluates the KL-divergence with data samples and makes $\phi^\star$ a function of $\theta$ via unrolling.
This enables us to accurately evaluate the KL-divergence, regardless of whether the generator collapses or not.
By unrolling the optimization process of $\phi$, the estimation of the KL-divergence can be differentiable w.r.t. the generator and can be optimized in practice.
We directly compare our methods with the above related methods in \secref{experiments}.

VAE-GAN~\cite{larsen2015autoencoding} also conjoins a likelihood-based objective and GAN, by utilizing the encoder/decoder structure and defining the reconstruction term in the feature space of a discriminator to generalize the metric of similarity.
Whereas in LBT and LBT-GAN, an auxiliary estimator is introduced to learn the model distribution and acts as a surrogate distribution of the implicit model. This enables LBT and LBT-GAN to be combined with any (approximate) density estimators.

\vspace{-.1cm}
\section{Experiments}\label{sec:experiments}
\vspace{-.1cm}

We now present the experimental results of LBT and LBT-GAN on both synthetic and real datasets.
Throughout the experiments, we use Adam~\cite{kingma2014adam} with the default setting to  optimize both the generator and the estimator (and the discriminator for LBT-GAN). We set the unrolling steps $K=5$. 
We perform $M=15$ steps of estimator update after each generator update. 
In LBT-GAN, We choose $\lambda_G$ from $\{0.1, 1, 5, 10\}$ manually.
Our codes will be released after the double-blind review process.

\vspace{-.1cm}
\subsection{Synthetic Datasets}
\vspace{-.1cm}

\begin{figure}[t]
	\centering
	\includegraphics[width=0.23\textwidth]{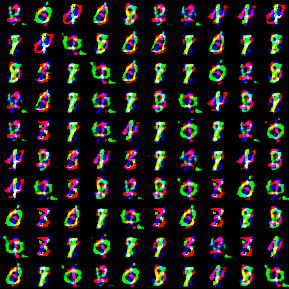}
	\includegraphics[width=0.23\textwidth]{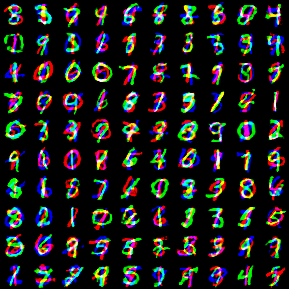}
	\includegraphics[width=0.23\textwidth]{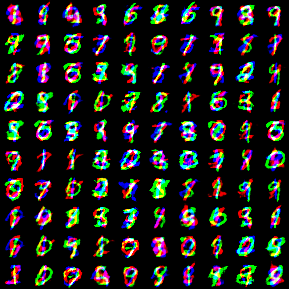}
	\includegraphics[width=0.23\textwidth]{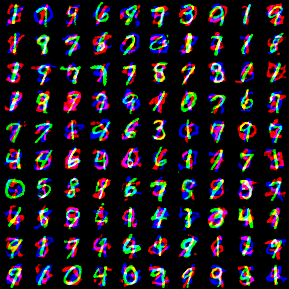}
	\caption{Generated samples of DCGANs and LBT-GANs with different network architectures of discriminators.
		From left to right: DCGAN and LBT-GAN with a large discriminator; DCGAN and LBT-GAN with a small discriminator. LBT-GANs can successfully generate realistic and diverse samples with different network architectures of discriminators.
	}
	\label{fig:smnist}
\end{figure}

 \begin{figure}[t]
 	\centering
	\includegraphics[width=0.23\textwidth]{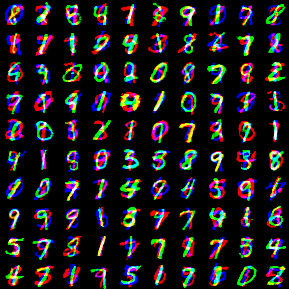}
	\includegraphics[width=0.23\textwidth]{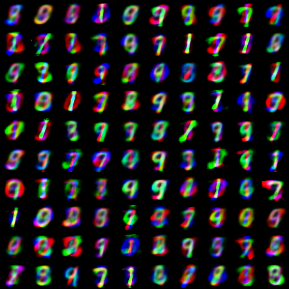}
	\caption{The generated samples of LBT-GAN (Left) with the estimator being a smaller VAE and the samples from the estimator (Right).}
	\label{fig:simplevae}
	\vspace{-.4cm}
\end{figure}

%

\begin{table*}[tbp]
\vspace{-.4cm}
  \centering
  \caption{Degree of mode collapse measured by number of mode captured (\# MC) and KL-divergence between the generated distribution over modes and the uniform distribution over $1,000$ modes on Stacked MNIST. Results are averaged over 5 runs.}
   \begin{small}
    \begin{tabular}{ccccc}
    \hline
            & ALI   & Unrolled~GAN & VEEGAN & DCGAN  \\
    \hline
    \# MC &  $16$    & $48.7$  & $150$   & $188.8$ \\
    KL    &  $5.4$   & $4.32$  & $2.95$  & $3.17$   \\
    \hline
           & PacGAN & D2GAN &  LBT-GAN (VAE) & LBT-GAN (NADE) \\
    \hline
    \# MC  & $664.2$ & $876.8$ & $\mathbf{999.6}$ & $\mathbf{1000}$ \\
    KL    & $1.41$ & $0.95$ & $\mathbf{0.19}$ & $\mathbf{0.05}$ \\
    \hline
    \end{tabular}
    \end{small}
    \label{table:mode}%
    \vspace{-.2cm}
\end{table*}%

We first compare LBT and LBT-GAN with  state-of-the-art competitors~\cite{goodfellow2014generative,mao2017least,metz2016unrolled,srivastava2017veegan} on 2-dimensional (2D) synthetic datasets, which are convenient for qualitative and quantitative analysis.
Specifically, we construct two datasets:
(i) \textbf{ring}: mixture of $8$ 2D Gaussian distributions arranged in a ring
and (ii) \textbf{grid}: mixture of $100$ 2D Gaussian distributions arranged in a 10-by-10 grid.
All of the mixture components are isotropic Gaussian, i.e., with diagonal covariance matrix.
For the ring data, the deviation of each Gaussian component is $\textrm{diag}(0.1, 0.1)$ and the radius of the ring is $1$
\footnote{In the original Unrolled GAN's setting~\cite{metz2016unrolled}, the std of each component is $0.02$ and the radius of the ring is $2$. In our setting, the ratio of std to radius is 10 times larger. We choose this setting in order to characterize different performance of ``Intra-mode KL-divergence'' clearly.}.
For the grid data, the spacing between adjacent modes is $0.2$ and the deviation of each Gaussian component is $\textrm{diag}(0.01, 0.01)$.
\fig{toy:grid:data}
shows the true distributions of the ring data and the grid data, respectively.
For all experiments on synthetic data, we use variational auto-encoders (VAEs) as the (approximate) density estimators for both LBT and LBT-GAN.
All the encoders and decoders in VAEs are two-hidden-layer MLPs.
For fair comparison, we use generators with the same network architectures (two-hidden-layer MLPs) for all methods.
For GAN-based methods, the discriminators are also two-hidden-layer MLPs.
The numbers of the hidden units for the generators and the estimators (and the discriminators for LBT-GAN) are all $128$.

To quantify the quality of the generator learned by different methods, we report the following 3 metrics to demonstrate different characteristics of generator distributions.
\textbf{Percentage of High Quality Samples}~\cite{srivastava2017veegan}:  
We count a sample as a \textit{high quality} sample of a mode if it is within three standard deviations of that mode. We say a sample is of high quality if it is a high quality sample of any modes.
We generate $500,000$ samples from each method and report the percentage of high quality samples.
\textbf{Number of Modes Covered}:
We count a mode as a \textit{covered mode} if the number of its high quality samples is greater than $20\%$ of the expected number of that, i.e., $20\% \times \frac{\textrm{\# of samples}}{\textrm{\# of modes}}$.
\footnote{
The exact expected number
should be a little bit less than $\frac{\textrm{\# of samples}}{\textrm{\# of modes}}$, according to the three-sigma rule.}
Intuitively, lower number of modes covered indicates more severe of mode collapse and a lack of global diversity.
\textbf{Averaged Intra-Mode KL-Divergence}:
We assign each generated sample to the nearest mode of the true distribution. For each mode, we fit a Gaussian model on its assigned samples, which can be viewed as an estimate of the generator distribution at that mode (where the true distribution is approximately Gaussian).
We define \textit{intra-mode KL-divergence} as the KL-divergence between the true distribution and the estimated distribution at each mode. Intuitively, it measures the local mismatch between the generator distribution and the true one.
The averaged intra-mode KL-divergence over all modes are reported.

\fig{toy} shows the generator distributions learned by different methods. Each distribution is plotted using kernel density estimation with $500,000$ samples.
We can see that LBT and LBT-GAN manage to cover the largest number of modes on both ring and grid datasets compared to other methods, demonstrating that LBT can generate globally diverse samples. The quantitative results are included in \fig{toy-curve:grid:mode}.
Note that our method covers all the 100 modes on the grid dataset while the best competitors LSGAN and VEEGAN cover 88 and 79 modes respectively.
Moreover, the number of modes covered by LBT increases consistently. On the contrary, Unrolled GAN and VEEGAN can sometimes drop the covered modes, 
attributed to their unstable training.

\fig{toy-curve:grid:kl}
shows the results of averaged intra-mode KL-divergence. We can see that LBT and LBT-GAN consistently outperform other competitors, which demonstrates that LBT framework can help capture better intra-mode diversity.
According to 
\fig{toy:grid:lsgan} and \fig{toy:grid:veegan},
although LSGAN and VEEGAN can achieve good mode coverage, they tend to concentrate most of the density near the mode means 
and fail to capture the local diversity within each mode.
In LBT-GAN, the discriminator has a similar effect, while the estimator prevents the generator to over-concentrate the density. Therefore, the Intra-mode KL-divergence of LBT-GAN may oscillate during training as in \fig{toy-curve:grid:kl}.

Finally, we show the percentages of high quality samples for each method in \fig{toy-curve:grid:rate}.
We find that LBT-GAN achieves better results than LBT and outperforms other competitors.
As LBT-GAN can
generate high quality samples while maintaining the global and local mode coverage, we use LBT-GAN in the following experiments.



\begin{figure*}[!tb]
    \centering
    \begin{subfigure}[t]{0.13\textwidth}
        \includegraphics[width=\textwidth]{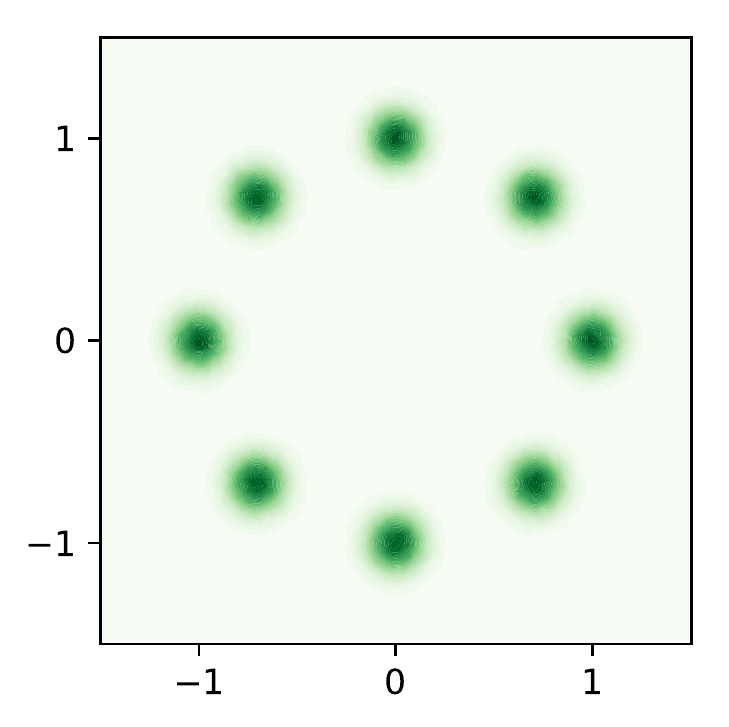}
    \end{subfigure}
    \begin{subfigure}[t]{0.13\textwidth}
        \includegraphics[width=\textwidth]{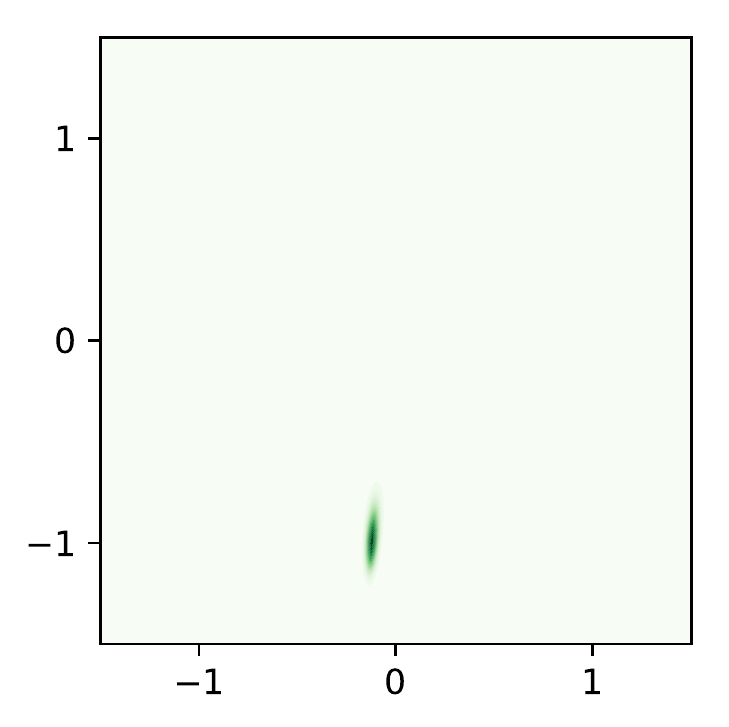}
    \end{subfigure}
    \begin{subfigure}[t]{0.13\textwidth}
        \includegraphics[width=\textwidth]{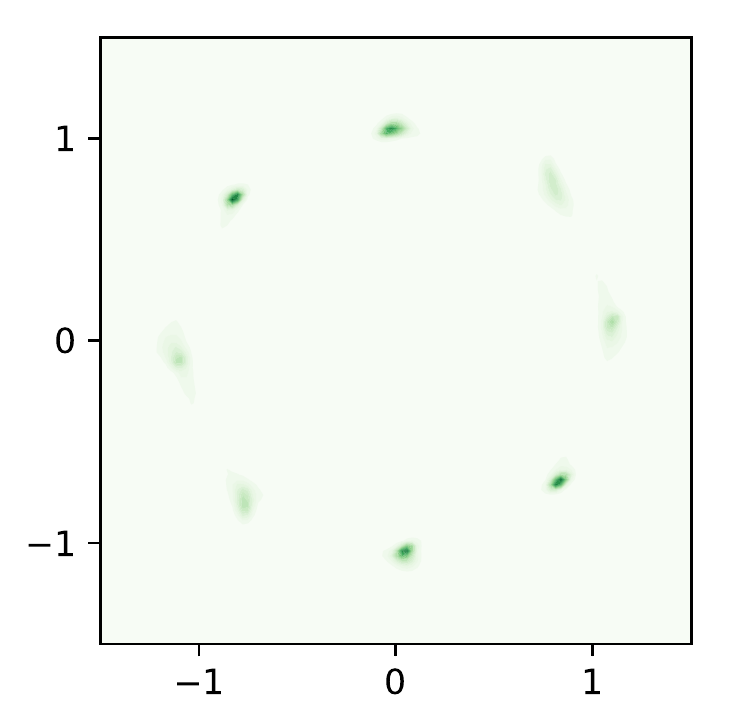}
    \end{subfigure}
    \begin{subfigure}[t]{0.13\textwidth}
        \includegraphics[width=\textwidth]{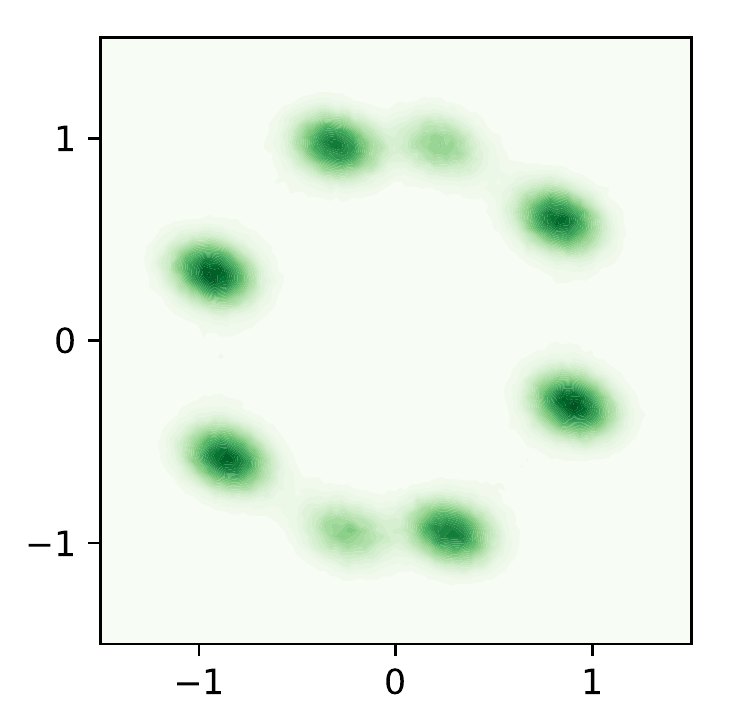}
    \end{subfigure}
    \begin{subfigure}[t]{0.13\textwidth}
        \includegraphics[width=\textwidth]{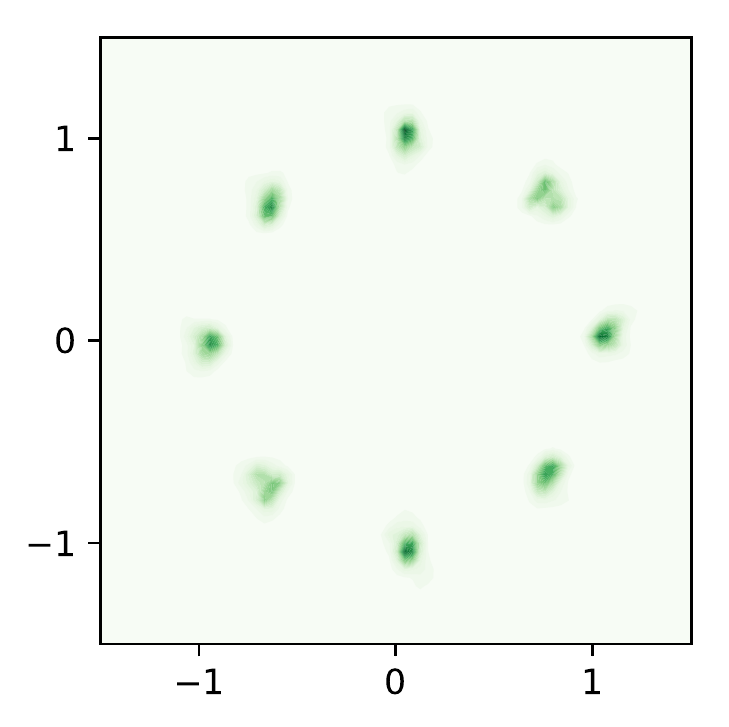}
    \end{subfigure}
    \begin{subfigure}[t]{0.13\textwidth}
        \includegraphics[width=\textwidth]{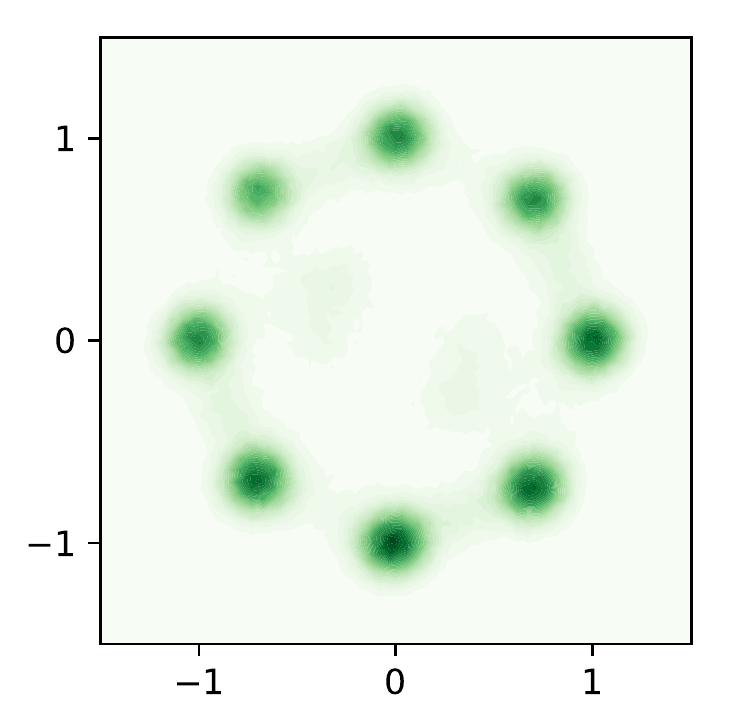}
    \end{subfigure}
    \begin{subfigure}[t]{0.13\textwidth}
        \includegraphics[width=\textwidth]{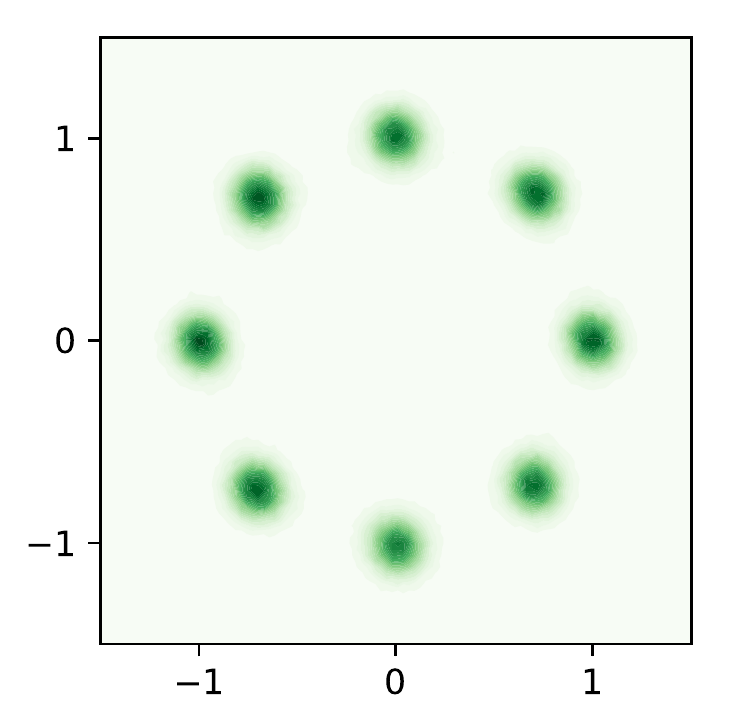}
    \end{subfigure}
    \\
    \begin{subfigure}[t]{0.13\textwidth}
        \includegraphics[width=\textwidth]{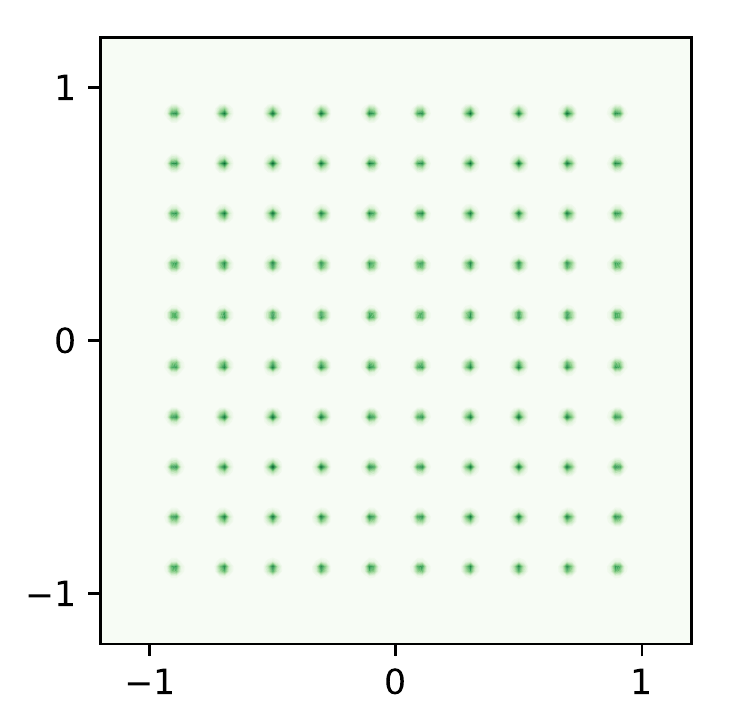}
        \caption{}\label{fig:toy:grid:data}
    \end{subfigure}
    \begin{subfigure}[t]{0.13\textwidth}
        \includegraphics[width=\textwidth]{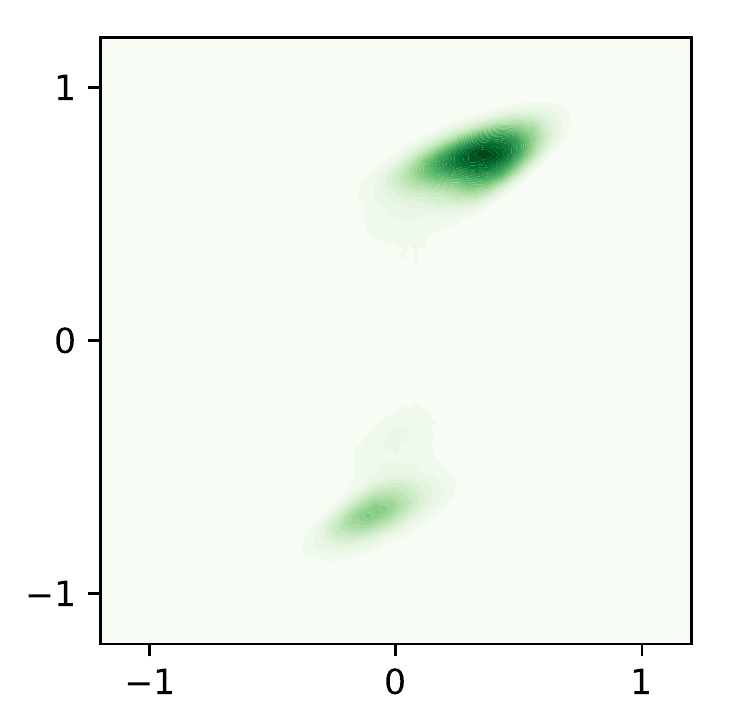}
        \caption{}\label{fig:toy:grid:gan}
    \end{subfigure}
    \begin{subfigure}[t]{0.13\textwidth}
        \includegraphics[width=\textwidth]{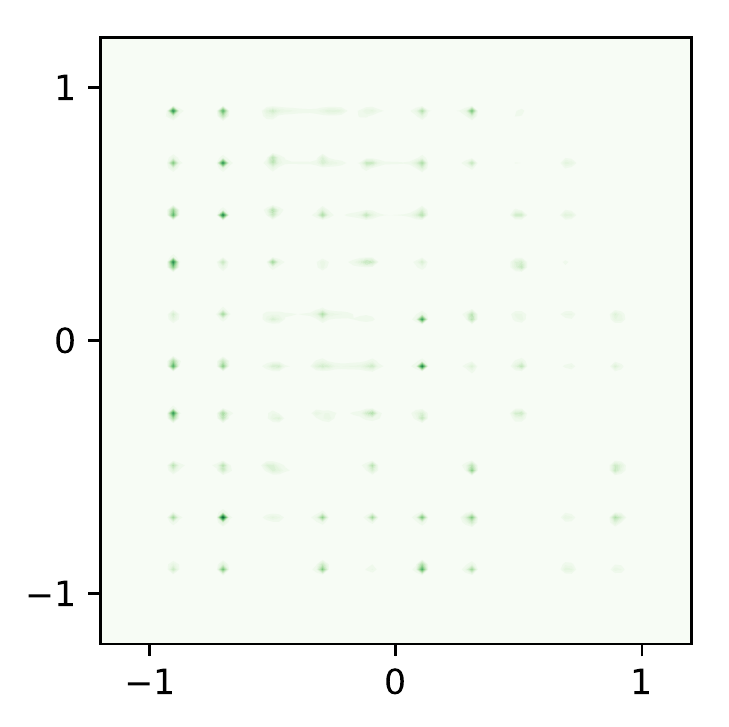}
        \caption{}\label{fig:toy:grid:lsgan}
    \end{subfigure}
    \begin{subfigure}[t]{0.13\textwidth}
        \includegraphics[width=\textwidth]{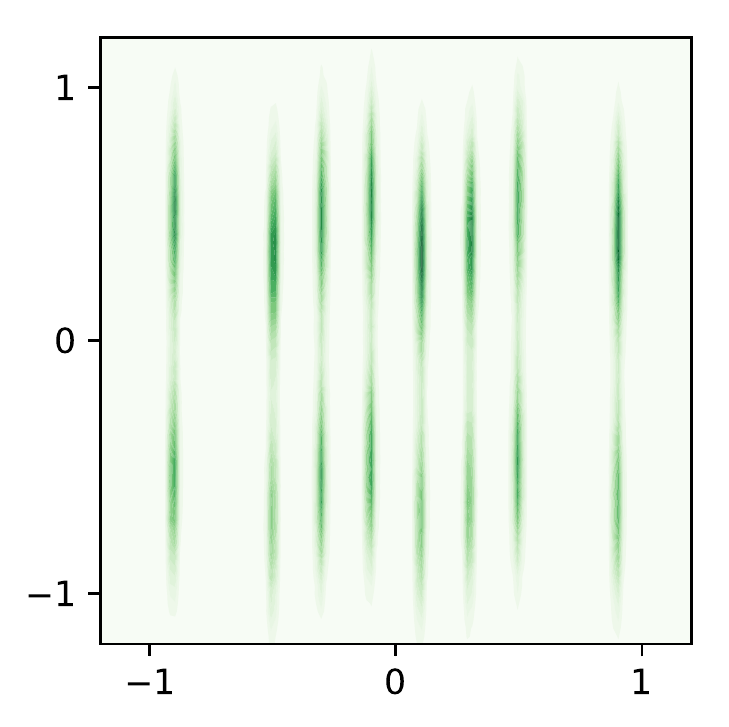}
        \caption{}\label{fig:toy:grid:urgan}
    \end{subfigure}
    \begin{subfigure}[t]{0.13\textwidth}
        \includegraphics[width=\textwidth]{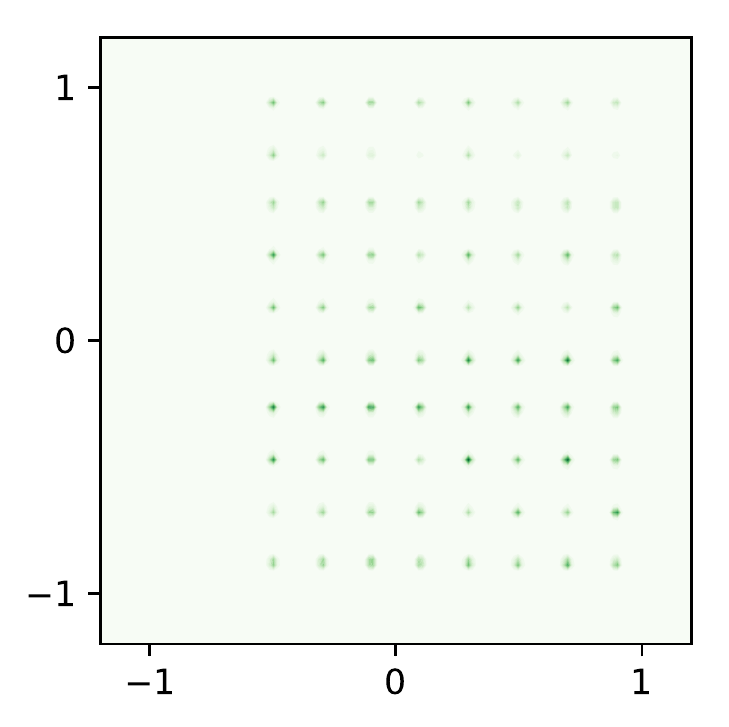}
        \caption{}\label{fig:toy:grid:veegan}
    \end{subfigure}
    \begin{subfigure}[t]{0.13\textwidth}
        \includegraphics[width=\textwidth]{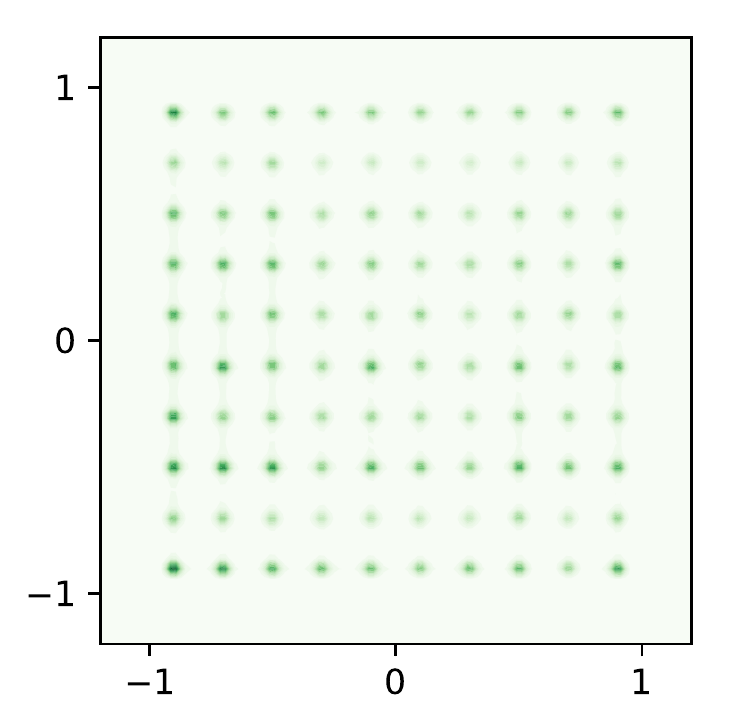}
        \caption{}\label{fig:toy:grid:our}
    \end{subfigure}
    \begin{subfigure}[t]{0.13\textwidth}
        \includegraphics[width=\textwidth]{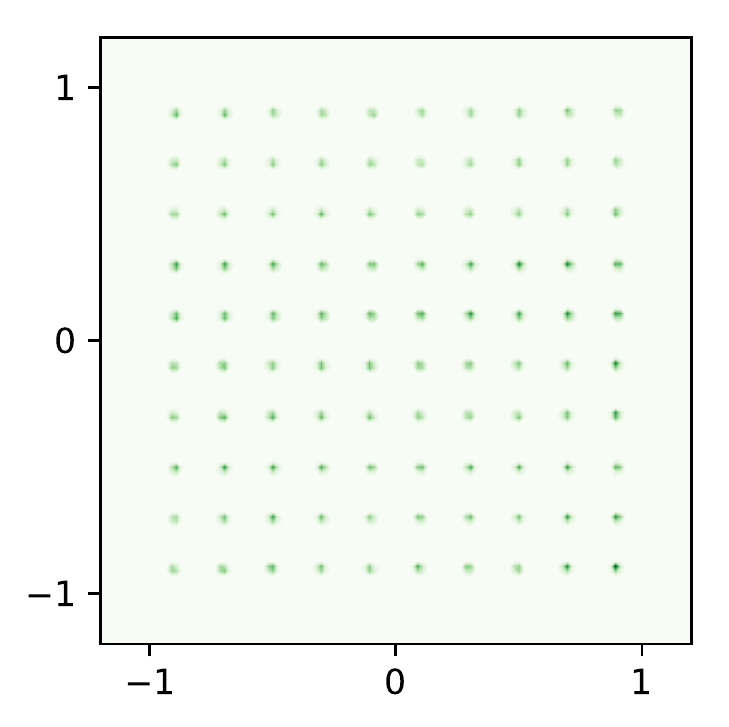}
        \caption{}\label{fig:toy:grid:lbtgan}
    \end{subfigure}
	\caption{Density plots of the data and the generator distributions trained on the ring data (Top) and the grid data (Bottom). The figures denote: (a) data, (b) vanilla GAN, (c) LSGAN, (d) Unrolled-GAN, (e) VEE-GAN, (f) LBT, (g) LBT-GAN. }\label{fig:toy}
    \vspace{-.2cm}
\end{figure*}

\begin{figure*}[tb]
    \centering
    \begin{subfigure}[t]{0.45\textwidth}
        \includegraphics[width=0.49\textwidth]{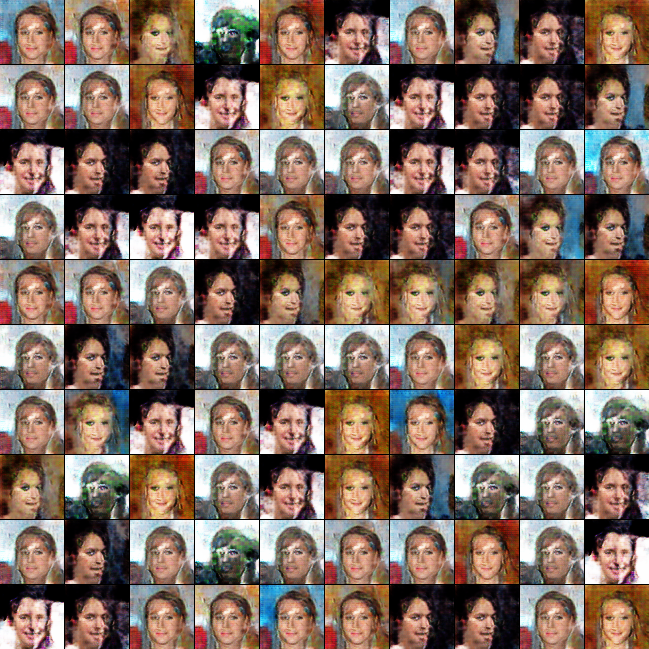}
        \includegraphics[width=0.49\textwidth]{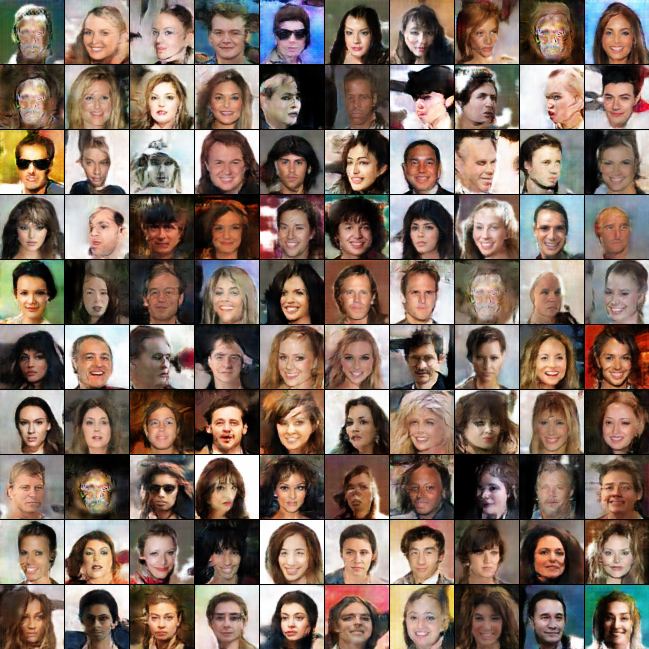}
        \caption{CelebA: DCGAN (Left) and LBT-GAN (Right).}
    \end{subfigure}
	\hspace{.2cm}
    \begin{subfigure}[t]{0.45\textwidth}
        \includegraphics[width=0.49\textwidth]{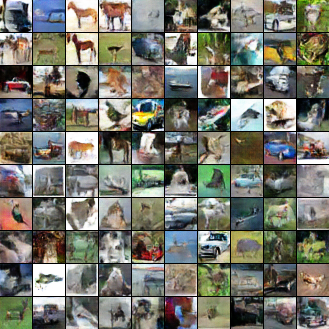}
        \includegraphics[width=0.49\textwidth]{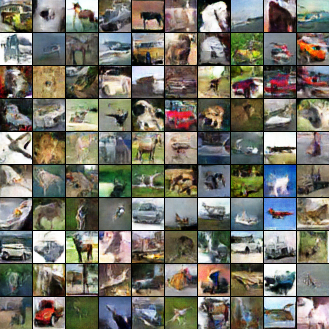}
        \caption{CIFAR10: DCGAN (Left) and LBT-GAN (Right).}
    \end{subfigure}
    \caption{Generated samples on CelebA (a) and CIFAR10 (b) of DCGANs and LBT-GANs.}
    \label{fig:color_image}
    \vspace{-.4cm}
\end{figure*}

\vspace{-.1cm}
\subsection{Stacked Mnist}\label{sec:smnist}
\vspace{-.1cm}

Stacked MNIST~\cite{metz2016unrolled} is a variant of the MNIST~\cite{lecun1998gradient} dataset created by stacking three randomly selected digits along the color channel to increase the number of discrete modes. There are $1,000$ modes corresponding to $10$ possible digits in each channel. Following~\cite{metz2016unrolled,srivastava2017veegan}, we randomly stack $128,000$ samples serving as the training data
and use $26,000$ generated samples to calculate the number of modes to which at least one sample belongs. 
We use a classifier trained on the original MNIST to identify digits in each channel of generated samples. 
Besides,
we also report the KL-divergence between the generated distribution over modes and the uniform distribution. Since carefully fine-tuned GANs can generate $1,000$ modes~\cite{metz2016unrolled}, we use smaller convolutional networks as both the generator and discriminator making our setting comparable to the competitors. 
We use two different density estimators for LBT-GAN: (i) a VAE with two-hidden-layer MLP ($1000$-$400$ hidden units) decoder and encoder
and (ii) a NADE with a single hidden layer of $50$ hidden units.

\tabl{mode} presents the quantitative results. 
In terms of the number of captured modes, LBT-GAN surpasses other competitors, which demonstrates the effectiveness of the LBT framework. Specifically, LBT-GAN can successfully capture almost all modes, and the results of KL-divergence indicate that the distribution of LBT-GAN over modes is much more balanced compare to other competitors.
Moreover, LBT-GAN works well with both approximate density estimators (e.g., VAE) and tractable density estimators (e.g., NADE).
Note that PacGAN and D2GAN also report comparable results with ours on different network architectures whereas they fail to capture all modes in our setting.
In contrast, LBT-GAN can generalize to PacGAN's architecture and capture all 1000 modes.
Our hypothesis is that the auxiliary estimators helps LBT-GAN generalize across different architectures.

\fig{smnist} shows the generated samples of GANs and LBT-GANs with different size of discriminators.
The visual quality of the samples generated by LBT-GANs is better than GANs. 
Further, we find the sample quality of DCGANs is sensitive to the size of the discriminators, while LBT-GANs can generate high-quality samples under different network architectures.

Furthermore, we implement LBT-GAN with a much smaller VAE where both the encoder and the decoder of the estimator are two-hidden-layer MLPs with only 20 units in each hidden layer. \fig{simplevae} shows that the samples from this VAE are of poor quality, which means that it can hardly capture the distribution of Stacked-MNIST.
Nevertheless, even with such a simple VAE, LBT-GAN can still capture $1,000$ modes and generate visually realistic samples (See the left panel of~\fig{simplevae}), verifying that an estimator with limited capability can still help our method avoid mode collapse.

\begin{figure}[tb]
    \centering
    \begin{subfigure}[t]{0.4\textwidth}
        \includegraphics[width=\textwidth,height=0.7\textwidth]{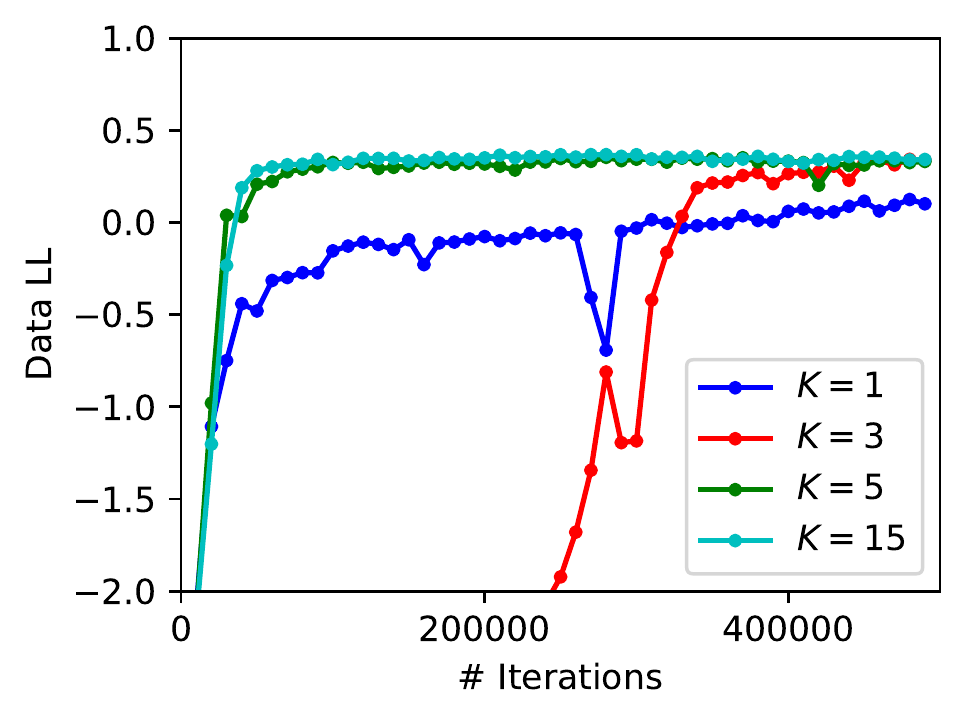}
    \end{subfigure}
    \begin{subfigure}[t]{0.4\textwidth}
        \includegraphics[width=\textwidth,height=0.7\textwidth]{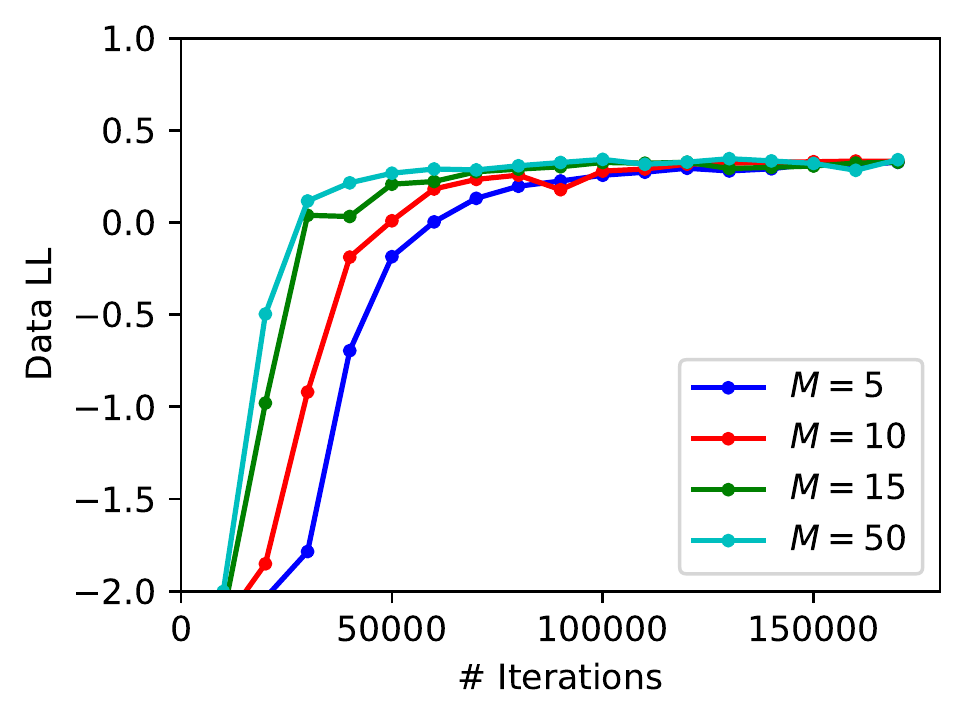}
    \end{subfigure}
	\caption{
Learning curves of LBT on the ring data with different unrolling steps $K$ and fixed $M=15$ (above) and with different estimator update steps $M$ and fixed $K=5$ (bottom).}\label{fig:sens}
\vspace{-.2cm}
\end{figure}

\vspace{-.1cm}
\subsection{CelebA \& CIFAR10}
\vspace{-.1cm}

We also evaluate LBT-GAN on natural images, including CIFAR10~\cite{krizhevsky2009learning} and CelebA~\cite{liu2015faceattributes} datasets. The generated samples of DCGANs and LBT-GANs are illustrated in \fig{color_image}.
LBT-GAN can generate images with comparable quality as DCGANs, demonstrating that LBT-GAN can successfully scale to natural images.
However, we observe LBT-GAN generates more diverse samples compared to DCGAN without careful fine-tuning, especially on the CelebA dataset.
The running time of LBT-GAN is roughly $2$ (or $3$) times of DCGAN on CelebA (or Cifar10). Empirically, LBT-GAN gives better or comparable results compared to VAE and VAE-GAN. Also, we find both GAN and VAE-GAN are sensitive to the architectures of G and D, whereas LBT-GAN is much more robust. This can be highlighted by adopting a relatively small G, where LBT-GAN with VAE can achieve 62.3 FID scores, significantly outperforming GAN(156.9), VAE(379.6) and VAE-GAN(100.5) on CelebA.

\vspace{-.1cm}
\subsection{Sensitivity Analysis of $K$ and $M$}\label{sec:sens}
\vspace{-.1cm}


Theoretically, a larger unrolling steps $K$ allows $\phi^K$ to better approximate $\phi^\star$ and a larger inner update iterations $M$ can better approximate the condition that $\phi^0=\phi^\star$ as analyzed in Appendix B.
However, large $K$ and $M$ on the other hand increase the computational costs.
To balance this trade-off, we provide sensitivity analysis of $K$ and $M$ in LBT. We use the experimental settings of the ring problem and adopt the values of the objective function \eqn{obj}, i.e., the log-likelihood of real samples evaluated by the learned estimator, as the quantitative measurement.

We first investigate the influence of the number of unrolling steps $K$ on the training procedure.
We vary the value of $K$ and show the learning curves with $K\in\{1,3,5,15\}$ in \fig{sens}. We observe that $K=1$ leads to a suboptimal solution and larger $K$ leads to better solution and convergence speed. We do not observe significant improvement with $K$ larger than $5$.
We also show the influence of the number of inner update iterations $M$ during training with  
$M\in\{5,10,15,50\}$ in \fig{sens}. Our observation is that larger $M$ leads to faster convergence, which is consistent with the analysis in \secref{method:if}.

\vspace{-.1cm}
\section{Conclusions \& Discussions}
\vspace{-.1cm}

We present a novel framework LBT to train an implicit generative model via teaching an auxiliary density estimator, which is formulated as a bilevel optimization problem. Unrolling techniques are adopted for practical optimization. Finally, LBT is justified both theoretically and empirically.

The main bottleneck of LBT is how to efficiently solve the bilevel optimization problem. For one thing, each update of LBT could be slower than that of the existing methods because the computational cost of the unrolling technique grows linearly with respect to the unrolling steps.
For another, LBT may need larger number of updates to converge than GAN because training a density estimator is more complicated than training a classifier. Overall, if the bilevel optimization problem can be solved efficiently in the future work, LBT can be scaled up to larger datasets.

LBT bridges the gap between the training of implicit models and explicit models. For one thing, the auxiliary explicit models can help implicit models overcome the mode collapse problems. For another, the implicit generators can be viewed as approximated samplers of the density estimators like auto-regressive models, from which getting samples is time-consuming. We discuss the former direction in this paper and leave the later direction as future work.

\vspace{-.1cm}
\section{Acknowledgements}
\vspace{-.1cm}

This work was supported by the National Key Research and Development Program of China (No. 2017YFA0700904), NSFC Projects (Nos. 61620106010, 61621136008, U19B2034, U181146), Beijing NSF Project (No. L172037), Beijing Academy of Artificial Intelligence (BAAI), Tsinghua-Huawei Joint Research Program, Tiangong Institute for Intelligent Computing, and the NVIDIA NVAIL Program with GPU/DGX Acceleration. C. Li was supported by the Chinese postdoctoral innovative talent support program and Shuimu Tsinghua Scholar.

 \bibliographystyle{splncs04}
\bibliography{bib}
\end{document}